\tikzstyle{io} = [trapezium,trapezium left angle=70,trapezium right angle=-70,minimum height=0.6cm, draw, fill=red!20, 
\tikzstyle{decision} = [diamond, draw, fill=red!20, 
\tikzstyle{block} = [rectangle, draw, fill=blue!20, 
\tikzstyle{line} = [draw, -latex']
\tikzstyle{cloud} = [draw, ellipse,fill=green!20, node distance=3cm,
\tikzstyle{circle} = [draw, ellipse,fill=green! node distance=1cm,
\renewcommand*\env@matrix[1][*\c@MaxMatrixCols c]{%
  \hskip -\arraycolsep
  \let\@ifnextchar\new@ifnextchar
  \array{#1}}
\DeclareMathAlphabet\mathbfcal{OMS}{cmsy}{b}{n}
\newcommand{\mat}[1]{\mathbf{#1}}
\begin{document}

\title{Asymptotically Fair Participation in Machine Learning Models: an Optimal Control Perspective}

\author{\name Zhuotong Chen \email ztchen@ucsb.edu \\
      \addr Department of Electrical and Computer Engineering\\
      University of California at Santa Barbara, \\
      Santa Barbara, CA, USA
      \AND
      \name Qianxiao Li \email qianxiao@nus.edu.sg \\
      \addr Department of Mathematics, \\ National University of Singapore \\ Singapore, 119076
      \AND
      \name Zheng Zhang \email zhengzhang@ece.ucsb.edu \\
      \addr Department of Electrical and Computer Engineering\\
      University of California at Santa Barbara, \\
      Santa Barbara, CA, USA}

\editor{My editor}

\maketitle

\begin{abstract}
The performance of state-of-the-art machine learning models often deteriorates when testing on demographics that are under-represented in the training dataset.
This problem has predominantly been studied in a supervised learning setting where the data distribution is static.
However, real-world applications often involve distribution shifts caused by the deployed models.
For instance, the performance disparity against minority users can lead to a high customer churn rate,
thus the available data provided by active users are skewed due to the lack of minority users.
This feedback effect further exacerbates the disparity among different demographic groups in future steps.
To address this issue,
we propose asymptotically fair participation as a condition to maintain long-term model performance over all demographic groups.
In this work, we aim to address the problem of achieving asymptotically fair participation via optimal control formulation.
Moreover,
we design a surrogate retention system based on existing literature on evolutionary population dynamics to approximate the dynamics of distribution shift on active user counts,
from which the objective of achieving asymptotically fair participation is formulated as an optimal control problem and the control variables are considered as the model parameters.
We apply an efficient implementation of Pontryagin’s maximum principle to estimate the optimal control solution.
To evaluate the effectiveness of the proposed method,
we design a generic simulation environment that simulates the population dynamics of the feedback effect between user retention and model performance.
When we deploy the resulting models to the simulation environment,
the optimal control solution accounts for long-term planning and leads to superior performance compared with existing baseline methods.
\end{abstract}

\begin{keywords}
Closed-loop Control, Machine Learning Fairness, Optimal Control, Pontryagin's Maximum Principle,
Fairness in a non-stationary environment
\end{keywords}

\section{Introduction}
\label{sec: introduction}
In a dynamically changing environment, the data distribution can change over time yielding the phenomenon of concept drift.
Concept drift refers to changes in the conditional distribution of the target variable given the input features, while the distribution of the input may stay unchanged \citep{gama2014survey, schlimmer1986incremental, widmer1996learning}.
This paper delves into a nuanced aspect of concept drift in the realm of machine learning. 
Specifically, it examines situations where the efficacy of a machine learning model has a direct influence on the number of active users, while when the feature distribution of those users remains constant. 
To illustrate this, imagine a context wherein users are constantly communicating with their personal digital devices, such as virtual assistants, smart speakers, or even advanced wearables. 
These interactions typically involve input features like voice commands, gestures, or other forms of user inputs.
In such a scenario, the device's performance is not merely measured by a technical metric but directly influences the user experience. 
A virtual assistant that consistently understands and executes voice commands correctly will foster user trust and dependence. 
On the contrary, an assistant that often misinterprets commands or fails to execute tasks might frustrate users. 
Thus, positive predictions and successful task executions could bolster the user base, as satisfied users are more likely to continue using the device and even recommend it to others. 
Conversely, a series of negative outcomes, such as misunderstood commands or incorrect task executions, could deter users, causing a decline in user retention rates. 
The intricacy of this relationship becomes even more evident when analyzing diverse demographic groups \citep{harwell2018amazon}. 
For example, younger users might be more forgiving of occasional glitches and continue using the device, while older users, who are generally less tech-savvy, might get discouraged and abandon it altogether after a few negative experiences. 
Similarly, cultural nuances might make certain groups more patient or more demanding. 
A demographic used to high service standards might expect the virtual assistant to understand and process commands in dialects or regional accents. 
Failing to meet such expectations could result in significant user attrition for that particular demographic.

In a non-stationary environment,
the flow of participative users poses particular challenges, especially for minority demographic groups.
Such non-stationary environments can inadvertently lead to these groups experiencing a disproportionate share of system errors.
Consider a scenario wherein a digital system, such as voice recognition software, is continuously fine-tuned based on the majority's input. 
Minority users, who may possess distinct accents or linguistic patterns, find that their interactions result in frequent misunderstandings by the system. 
Over time, the consistent lack of system efficiency for these users creates a sentiment of alienation. 
These users may begin to question the utility of the system for them, given that their unique requirements seem persistently unmet.
As they experience these setbacks, there's a risk that these minority users will reduce their level of interaction with the system, or in extreme cases, stop using it altogether. 
This further deprives the system of valuable data inputs from these users. 
In essence, the system, which is already under-representing them, gets even less data from them, leading to an even sharper decline in the efficacy of its responses to this group.
This decline manifests in a detrimental feedback loop: As minority groups reduce their engagement due to high error rates, the data pool from these groups diminishes. 
The reduced data input further impedes the system's ability to improve its performance for these groups, leading to even higher error rates in the future. 
Consequently, the issue compounds, and the system's inefficiency for these groups becomes even more entrenched \citep{liu2018delayed}.
Figure \ref{fig: structure} illustrates this concept, starting with the initial feature distributions of two distinct demographic groups (shown in Figure \ref{fig: structure} (a)), where the left and right blobs represent the features of the majority and minority groups, respectively. 
The use of population risk minimization, a common objective for generating predictive models, results in undesired patterns as shown in Figure \ref{fig: structure} (b),
where the decision boundary of the predictive model at the terminal stage shows a clear bias towards the majority group, which leads to the diminishing of minority users.
The impact of this bias on the population densities of users interacting with the predictive model is further highlighted in Figure \ref{fig: structure} (d).

Several existing studies have attempted to address the aforementioned fairness problem in non-stationary environments. 
A predominant approach involves the implementation of conventional fairness algorithms crafted for static settings \citep{hashimoto2018fairness}.
Under this paradigm, the algorithm is applied iteratively at every discrete time step, with the aim to optimize the model parameters so that they meet a particular objective function's optimality criteria for that instance. 
While such an approach seems straightforward and can cater to immediate requirements, it often carries an inherent myopia. 
By being overly focused on achieving local optimality at each step, it may inadvertently miss out on discerning a more global solution that optimally serves the system throughout its entire simulation horizon.

\begin{figure}[t]
\centering
\begin{minipage}{.9\linewidth}
\centering
\includegraphics[width=\linewidth]{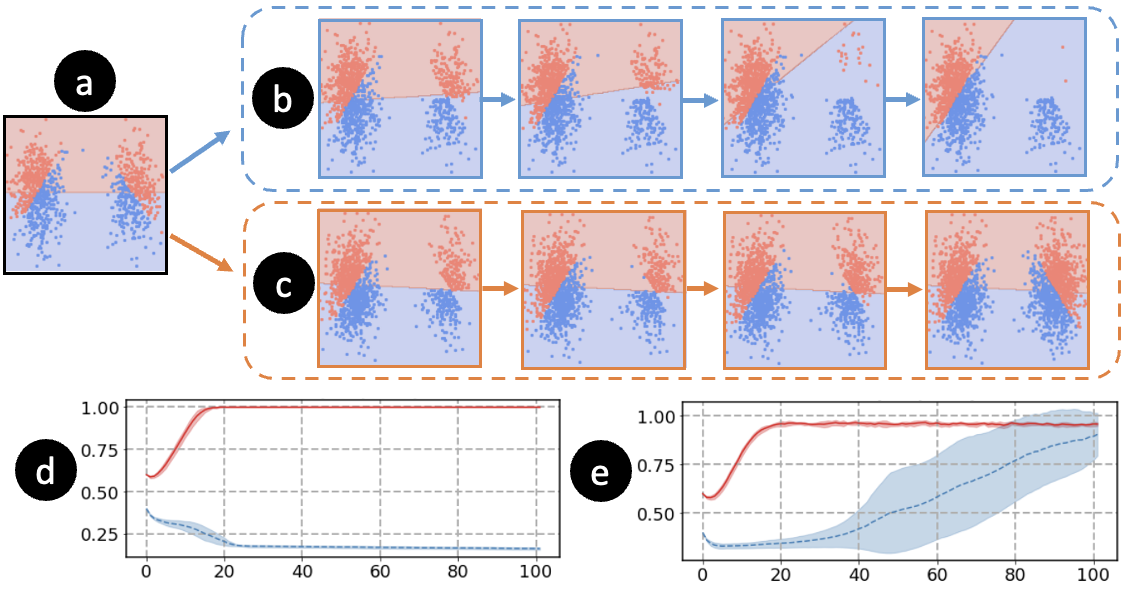}
\end{minipage}
\caption{
(a) illustrates the initial feature distributions for two distinct demographic groups. 
The variations in feature distribution among engaged users, resulting from population risk minimization and the proposed optimal control method, are presented in (b) and (c). 
(d) and (e), respectively, reveal the changes in the population densities of these users over time.
}
\label{fig: structure}
\end{figure}

In our study, 
we aim to construct machine learning models that engage users from all demographic groups, amid the described population dynamics.
This task can be construed as a trajectory planning problem \citep{bellman1952theory}, where the trajectory represents the evolution of user participation willingness, and our aim is to optimize this trajectory using judicious control design.
However, adopting such an approach mainly poses four challenges.
First and foremost, the concept of fairness in a non-stationary environment lacks a formal definition.
Existing fairness definitions focus on measuring performance disparities between different demographic groups at a single time point like equal opportunity \citep{hardt2016equality}, and demographic parity \citep{feldman2015certifying}.
Yet, these definitions, being focused on static supervised learning environments, fail to encapsulate the goals in a non-stationary environment.
Second,
A requirement for the optimal control methodology is a comprehensive understanding of the system's underlying dynamics. 
When it comes to user populations, these dynamics are often intricate, multi-faceted, and elusive, making them difficult to precisely model or predict.
Thirdly,
optimal control problems, by their nature, are computationally demanding,
which poses challenges to obtaining the optimal control solution.
Lastly, evaluation of the performance of an optimal control solution necessitates its deployment on population dynamics with real-world users, an often costly and unattainable requirement. 
We address these challenges with the following contributions:
\begin{enumerate}
    \item
    We introduce the concept of asymptotically fair participation to describe the maintenance of performance across all demographic groups over an extended period.
    \item
    We introduce a surrogate retention system, drawing inspiration from the rich body of work on evolutionary population dynamics,
    from which we formulate the objective of achieving asymptotically fair participation as an optimal control problem. 
    To address this complex problem, we employ an efficient implementation of Pontryagin's maximum principle (PMP), which allows us to obtain the control solution efficiently.
    \item 
    We prove that the proposed optimal control method produces models with monotonically non-decreasing performance for each update,
    this analysis motivates a novel form for the Hamiltonian in PMP.
    \item
    We design a simulator that simulates the non-stationary environment for the user's willingness to retain or churn from a deployed model.
    This simulator allows for testing the evolutionary fairness property of machine learning models in synthetic population dynamics.
    Through empirical evaluation,
    we underscore the benefits of incorporating the underlying dynamics into our model design.
    Our results consistently outperform existing baseline methods, thereby validating the superiority of our approach in terms of performance. 
\end{enumerate}

Figure \ref{fig: structure} (c) and (e) showcase the results of the proposed optimal control method. 
In these figures, the decision boundary of the predictive model exhibits a mild bias in favor of the minority demographic group. 
This approach successfully sustains the engagement of users from the minority group at every time step, thereby averting any bias in the predictive models towards the majority group in subsequent time steps. 

\section{Fairness in Non-Stationary Environment}
In this section,
we elaborate on the problem configuration to fair user participation in a non-stationary environment, where user retention and churn are conditioned on the model's performance on the data they provide (Section \ref{sec: problem description}). 
The dynamics of such an environment necessitate a condition for the machine learning models to fulfill, which we term asymptotically fair participation. 
This condition requires the models to sustain their performance across all demographic groups over an extended period (Section \ref{sec: definition of asymptotically fair participation}).

\subsection{Problem Setting for Fair Participation in a Non-Stationary Environment}
\label{sec: problem description}
In a non-stationary environment,
we consider a predictive model as a sequence of models,
denoted as $\{ \boldsymbol\theta_t \}_{t=0}^{T-1}$ where each model  $\boldsymbol\theta_t \in \mathbb{R}^m$ and $m$ is the number of parameters.
Within this environment, we focus on $K$ different demographic groups. 
Each of these groups consists of $N^i$ users where $i$ refers to the index of the demographic group,
and the total number of users is represented as $N = \sum_{i=1}^K N^i$.
These users include both participative and non-participative users of the predictive model,
and the $\rm n^{th}$ user is represented by a feature vector ($\mat{x}_n \in \mathbb{R}^d$), a label ($y_n \in \mathbb{R}$),
and a demographic membership ($z_n \in \mathbb{R}$).
The predicted output of each model is denoted as $\hat{y} = \boldsymbol\theta_t(\mat{x})$.

Now, we formulate the population dynamics of the user's willingness to participate as a Markov Decision Process (MDP) and denote it as \textbf{population retention system},
\begin{itemize}
    \item 
    \textbf{States}:
    The state is described by a binary vector $S_t \in \{0, 1\}^N$ that indicates whether each user is participative with respect to the predictive model at time step $t$.
    For instance,
    the $\rm n^{th}$ user is participative if $[S_t]_n = 1$ and non-participative if $[S_t]_n = 0$.
    \item
    \textbf{Actions}:
    Actions are the model outputs $\{\hat{y}_n\}_{n=1}^N = \{\boldsymbol\theta_t(\mat{x}_n)\}_{n=1}^N$, which are predictions derived from the feature vectors of all users. 
    Specifically, in the context of binary classification, an action takes the form of a binary vector with a length of $N$. 
    This vector categorizes each individual with either a positive or negative outcome.
    Moreover,
    only actions of participative users (where $[S_t]_n = 1$) are leveraged by the transition probability that is specified later.
    \item
    \textbf{Rewards}:
    At a time $t$, a reward $R(S_t)$ is measured from the current state $S_t$.
    Let us denote $\lambda_t^i$ as the population density of participative users from the $\rm i^{th}$ demographic group,
    \begin{equation*}
    \lambda_t^i = \frac{1} {N^i} \cdot \sum_{n=1}^N [S_t]_n \cdot \mathbbm{1}_{z_n = i},
    \end{equation*}
    where $\mathbbm{1}_{z_n = i}$ is a indicator function that returns $1$ if $z_n = i$, and $0$ otherwise.
    This density value is between $0$ and $1$.
    If $\lambda_t^i$ increases, it means more users from the $\rm i^{th}$ demographic group become participative in the predictive model.
    Conversely, a decrease suggests users from that group are leaving or becoming non-participative.
    We compute rewards by measuring the population densities, one example could be the sum of population densities $R(S_t) = \sum_{i=1}^K \lambda_t^i$.
    \item
    \textbf{Transition probability}:
    The transition probability characterizes the changes in the participative status of the user.
    The status of the $\rm n^{th}$ user at time step $t+1$ is conditioned on the $\rm n^{th}$ action $\boldsymbol\theta_t(\mat{x}_n)$.
    We assume that the probability of the user's participation is proportional to the model performance of each particular user \citep{riedl2013tweeting, huang2009searching}.
    A model with higher satisfaction (e.g. correct prediction) leads to a higher probability of user retention, meaning that $[S_{t+1}]_n$ is more likely to be $1$.
    Conversely,
    wrong prediction results in a higher possibility of user churn, meaning that 
    the likelihood of $[S_{t+1}]_n = 0$ is high.
    This is defined by the transition probability,
    \begin{align}
    \label{eq: population retention system}
    S_{t+1} \sim M^{\ast}(\cdot | S_t, & \{\boldsymbol\theta_t(\mat{x}_n)\}_{n=1}^N, \{y_n\}_{n=1}^N, \{z_n\}_{n=1}^N),
    \nonumber \\
    {\rm where}
    \;\;
    [S_{t+1}]_n \sim
    &
    \begin{cases} 
    {\rm Bernoulli(1)} \;\; & \text{if } \boldsymbol\theta_t(\mat{x}_n) = y_n, 
    [S_t]_n = 1, \\
    {\rm Bernoulli(\epsilon)} & \text{if } \boldsymbol\theta_t(\mat{x}_n) \neq y_n, [S_t]_n = 1, \\
    {\rm Bernoulli(\alpha)} & \text{if } [S_t]_n = 0,
    \end{cases} 
    \end{align}
    where $\epsilon$ and $\alpha$ are small positive numbers that introduce randomness in transitions.
    \item 
    \textbf{Initial states}:
    Given the initial population densities,
    the initial states are constructed by randomly sampling participative users from each demographic group.
\end{itemize}
The setup of the MDP slightly deviates from the general case, where the agent (here the model) can take any action in the action space.
In this framework, actions are produced by a predetermined, parameterized model, such as a neural network. 
Additionally, these actions can only be updated by data obtained from users who are currently participating.
The proposed MDP consists of two stages.
\paragraph{Model generation:}
During this stage,
our goal is to create a sequence of $T$ models, represented by $\{\boldsymbol\theta_t\}_{t=0}^{T-1}$,
from interacting with the population retention system in Eq.~\eqref{eq: population retention system}.
At every time step,
we use the feature vectors and labels of participative users, along with feedback on rewards, to improve model performance.
However, if a user stops participating in the following time step, we no longer have access to their data.
\paragraph{Model evaluation:}
To assess the performance of the generated models, we deploy them into the population retention system, initiating from a random starting point, and evaluating the reward based on the observed population densities $\lambda_0^i, \lambda_1^i,...,\lambda_T^i$.

\begin{figure}[t]
\centering
\begin{minipage}{.9\linewidth}
\centering
\includegraphics[width=\linewidth]{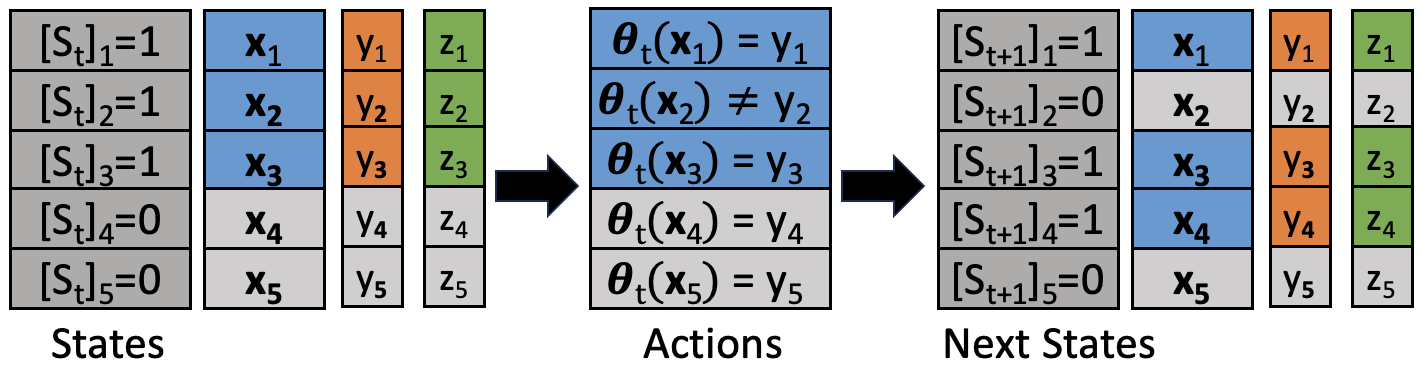}
\end{minipage}
\caption{
This figure represents the MDP transition as outlined in Eq.~\eqref{eq: population retention system}. 
At time step $t$, the first, second, and third users are actively participating. However, due to an incorrect prediction made by the model $\boldsymbol\theta_t$, for the second user, the user becomes non-participative in the following time step. 
On the other hand, the fourth user, who is non-participative at $t$, 
has a slight chance of becoming participative in the next time step.
}
\label{fig: mdp}
\end{figure}

Figure \ref{fig: mdp} details the proposed MDP.
At time step $t$,
the state $S_t$ reveals that the first three users are actively participating.
However,
due to an erroneous prediction by the predictive model $\boldsymbol\theta_t(\mat{x}_2) \neq y_2$,
the second user discontinues participation in the next time step, leading to $[S_{t+1}]_2  = 0$.
Regarding users who are not participating at $t$, there is a small chance they will engage with the model in the following time step, exemplified by the behavior of the fourth user.

\subsection{The Definition for Asymptotically Fair Participation}
\label{sec: definition of asymptotically fair participation}
The population retention system, defined in Eq.~\eqref{eq: population retention system}, 
simulates variation in the user's activity in participating as conditioned on the model performance of each particular individual.
In this context,
a predictive model with minimum population risk across the entire population encourages increased user participation within the system.
Consequently, active users are more inclined to provide additional training data, which is invaluable for refining the model performance.
With these newly contributed data, the predictive model undergoes further refinement. 
This iterative process progressively improves the model's predictive accuracy, thereby reducing population risk even further in each succeeding time step. 
Given this continuous enhancement and the relationship between user participation and model performance, there exists a positive feedback loop.
With sufficient iterations and time, this loop leads to a scenario where the population risks associated with all demographic groups tend towards $0$. 
This occurs as the number of total users denoted as $N^i$ grows significantly larger. 
Simultaneously, the population densities of all demographic groups approach $1$,
indicating high levels of engagement and participation across all demographics.
This motivates us to define asymptotically fair participation as follows:
\begin{definition}{\bf (Asymptotically fair participation)}
\label{def:fairness definition}
A sequence of models satisfies asymptotically fair participation if the dynamics it drives satisfy the following condition:
\begin{equation*}
    \lambda_t^i \rightarrow 1 
    \;\;
    {\rm almost \; surely}, \;\; {\rm as} \; t \rightarrow \infty \;\; \forall i \in [1, 2, ..., K], \;\; {\rm s.t. ~Eq.~} \eqref{eq: population retention system}.
\end{equation*}
\end{definition}
When the population densities of participative users from each demographic group converge towards $1$, it indicates that the underlying predictive model consistently performs fairly across all these groups.
Moreover,
the satisfaction of asymptotically fair participation by a sequence of models is implicitly linked to the initial population densities.
In scenarios where all demographic groups initially have high population densities, the likelihood of achieving asymptotically fair participation increases. 
Conversely, scenarios with highly imbalanced representations of demographic groups pose significant challenges in meeting this condition.
Therefore, the initial representation of demographic groups plays a critical role in the implementation of models to the condition of asymptotically fair participation.

The concept of asymptotically fair participation provides a more precise interpretation of fairness in a non-stationary environment.
Prior research has considered disparity amplification \citep{hashimoto2018fairness} to assess the representation disparity across all demographic groups at each individual time step.
However, the definition of asymptotically fair participation differs from this approach as it emphasizes long-term behavior.
To illustrate, consider an extreme scenario where the population densities of all demographic groups concurrently decay to zero.
Although this is an undesirable situation, it would nonetheless satisfy the condition of disparity amplification, yet not meet the criterion of asymptotically fair participation. 
Thus, the distinction underscores the importance of considering long-term behavior in fairness definitions, a perspective that asymptotically fair participation uniquely encapsulates.

\section{An Optimal Control Solution for Asymptotically Fair Participation}
According to Definition \ref{def:fairness definition},
our goal is to maximize the population densities across all demographic groups.
Due to the inaccessibility of the underlying dynamics of the population retention system, as defined by Eq.~\eqref{eq: population retention system},
our initial step involves the construction of a surrogate system to estimate these dynamics.
Subsequently, we formulate the condition of asymptotically fair participation as an optimal control problem and provide an efficient solver based on Pontryagin's maximum principle (PMP) \citep{pontryagin1987mathematical}.

\subsection{Surrogate Retention System for the Evolutionary Population Dynamics}
\label{sec:surrogate retention system}
Our design of the surrogate retention system is rooted in the existing body of literature on evolutionary population dynamics \citep{cushing2019difference}.
This system features a low-dimensional state representation, which not only provides a meaningful connection to evolutionary dynamics but also offers practical advantages in terms of computational efficiency.

\paragraph{Evolutionary population dynamics describes the dynamics of user participation.}
Difference equations typically describe discrete-time dynamics such that the temporal variations in vital rates are attributable to dependencies on population density. 
An individual's behavior and activities, such as reproduction and survival, can undergo fluctuations, leading to the evolutionary dynamics of population density.
Explicit temporal dependencies can be modeled by optimizing the coefficients of a difference equation over time \citep{vincent2005evolutionary}. 
To account for such evolutionary mechanisms, a difference equation population model can be developed \citep{cushing2019difference}.
In a simplified scenario, the growth and decay of the population are attributed solely to births and deaths, respectively.
Individuals present at time $t+1$ either emerged during the time interval or were present at time $t$ and survived the time unit.
To model these dynamics, we denote $\boldsymbol\lambda_t=[\lambda_t^1, \lambda_t^2, ..., \lambda_t^K]^T$ as a $K$-dimensional vector.
Subsequently, a $K$-dimensional discrete dynamic system describing the simplified evolutionary population dynamics can be constructed as follows:

\begin{equation}
\label{eq: surrogate retention system}
\boldsymbol\lambda_{t+1} = M(\boldsymbol\lambda_t, \boldsymbol\theta_t)
= 
\begin{bmatrix}
\beta(\kappa^1(\lambda_t^1, \boldsymbol\theta_t))
\\
\beta(\kappa^2(\lambda_t^2, \boldsymbol\theta_t))
\\
\vdots \\
\beta(\kappa^K(\lambda_t^K, \boldsymbol\theta_t))
\\
\end{bmatrix}
\odot
(\mat{1} - \boldsymbol\lambda_t)
+
\begin{bmatrix}
\sigma(\kappa^1(\lambda_t^1, \boldsymbol\theta_t))
\\
\sigma(\kappa^2(\lambda_t^2, \boldsymbol\theta_t))
\\
\vdots \\
\sigma(\kappa^K(\lambda_t^K, \boldsymbol\theta_t))
\\
\end{bmatrix}
\odot
\boldsymbol\lambda_t,
\end{equation}
where $\odot$ denotes the element-wise product between two vectors.
The function $\kappa^i(\cdot)$ calculates a value indicative of the ${\rm i^{th}}$ population's reaction to external controls $\boldsymbol\theta_t$ (such as medical interventions or the distribution of resources). 
The functions $\beta(\cdot)$ and $\sigma(\cdot)$ are used to determine the proportions of births and the surviving population over a given time period, respectively. 
The domain for both the birth rate and survival rate functions is confined to the interval $[0, 1]$, which establishes a range for population densities.

In cases where user retention or churn rates impact population dynamics, the function $\kappa^i(\cdot)$ is employed to measure the model's performance on the currently active population.
The birth rate $\beta(\cdot)$ and the survival rate $\sigma(\cdot)$ illustrate the proportions of incoming and retained users at each respective time step.
Furthermore, when model performance is evaluated through a reward (or conversely, a loss) function, we hypothesize that the birth and survival rates act proportionally (or inversely) to the model performance $\kappa^i(\cdot)$. 
This assumption ensures that an improvement in model performance leads to an increase in population density. 
We refer to this discrete dynamic system as the \textbf{surrogate retention system}.

The surrogate retention system leverages a statistically aggregated metric, specifically population density, to represent the ratio of active users within each demographic group. 
This serves to condense the intricate state space of the population retention system as detailed in Eq.~\eqref{eq: population retention system}. 
Central to our approach is the hypothesis that this simplified state representation can effectively encapsulate the nuances of the population dynamics. 
As a result, we achieve a low-dimensional state representation where the defining state elements are the population densities across various demographic groups.

\begin{remark}
Numerous system estimations have been put forward to investigate the dynamic relationship between user engagement and data-driven services. 
For example, the algorithm outlined by \citep{hashimoto2018fairness} contemplates a straightforward system to account for the user count based on model performance. 
Another system, described by \citep{dean2022multi}, looks into the endogenous shift in distributions, focusing on how populations distribute themselves among services, and how these services, in turn, select predictors derived from the observed userbase.
This is distinct from our proposed surrogate retention system, which is rooted in the evolutionary population dynamics. 
Moreover, our system features optimizable parameters, aiding in approximating the population retention mechanisms. 
\end{remark}

\paragraph{Evaluation of model performance through distributionally robust optimization.}
The surrogate retention system, as characterized by Eq.~\eqref{eq: surrogate retention system},
yields a low-dimensional state representation, comprising solely of the population densities across all demographic groups.
However, its simulation necessitates the selection of data provided by active users based on the population densities.
This can be done in many ways.
For instance,
one can randomly sample participating users from the $\rm i^{th}$ demographic group at time step $t$,
or sampling proportional to the model performances of users.
The sampling-based data generation leads to a stochastic dynamic system,
which creates challenges in solving the optimal control problem.
In this work,
we consider the formulation of distributionally robust optimization (DRO), which facilitates a deterministic generation process of $\lambda_t^i$ proportion of users who received optimal model performance.

To begin with,
let $d_{\mathcal{X}^2}(\mathcal{P}||\mathcal{Q}) = \int (\frac{d \mathcal{P}} {d\mathcal{Q}} - 1)^2 d\mathcal{Q})$ denote the $\mathcal{X}^2$-divergence between two probability distributions $\mathcal{P}$ and $\mathcal{Q}$,
$\mathcal{B}(\mathcal{P}, r) = \{\mathcal{Q}: d_{\mathcal{X}^2}(\mathcal{P}||\mathcal{Q}) \leq r\}$ denote the chi-squared ball around a probability distribution $\mathcal{P}$ of radius $r$.
Let $\mathcal{P}^i$ be the feature distribution of users from the $\rm i^{th}$ demographic group,
we consider the performance measure $\kappa^i(\cdot)$ as the worst-case distributional loss over all r-radius balls around $\mathcal{P}^i$ defined as follows,
\begin{equation}
    \label{eq: worst-case distribution loss}
    \kappa^i(\lambda_t^i, \boldsymbol\theta_t) = \sup_{\mathcal{Q} \in \mathcal{B}(\mathcal{P}^i, r_t^i)} \mathbb{E}_{(\mat{x}, y) \sim \mathcal{Q}} \Phi(\boldsymbol\theta_t, \mat{x}, y), 
    \;\; 
    r_t^i = (1 / \lambda_t^i - 1)^2.
\end{equation}
Clearly,
as the number density $\lambda_t^i$ approaches $1$,
$r_t^i$ decays to $0$, and $\kappa^i(\lambda_t^i, \boldsymbol\theta_t)$ is equivalent to population risk.
For small $\lambda_t^i$,
the radius $r_t^i \rightarrow \infty$ and this leads to a large loss value.
In general,
computing the worst-case distributional loss over a set of distributions is a challenging task.
Fortunately,
the maximization problem in Eq.~\eqref{eq: worst-case distribution loss} can be reformulated into its dual form \citep{duchi2019distributionally}.
More specifically,
if $\Phi(\cdot)$ is upper semi-continuous for any $\boldsymbol\theta$,
then for $r_t^i \geq 0$ and any $\boldsymbol\theta$,
the following holds true:
\begin{align}
    \label{eq: dual of worst-case distribution loss}
    \sup_{\mathcal{Q} \in \mathcal{B}(\mathcal{P}^i, r_t^i)} \mathbb{E}_{(\mat{x}, y) \sim \mathcal{Q}} & \Phi(\boldsymbol\theta_t, \mat{x}, y)
    =
    \inf_{\eta \in \mathbb{R}} \Big(C(\lambda_t^i) \cdot \big( \mathbb{E}_{\mathcal{P}^i} \big[[\Phi(\boldsymbol\theta_t, \mat{x}, y) - \eta]_{+}^2 \big] \big)^{\frac{1} {2}} + \eta \Big), \nonumber\\
    & {\rm where} \;\; C(\lambda_t^i) = (2 (1 / \lambda_t^i - 1)^2 + 1)^{\frac{1}{2}},
\end{align}
where $[x]_+ = x$ if $x \geq 0$ and $0$ otherwise.
This dual form provides an intuitive interpretation of the DRO loss.
At each time step $t$,
given $\boldsymbol\theta_t$ and $\lambda_t^i$,
the DRO loss is computed by averaging the sample losses that are higher than the optimal $\eta^{\ast}(\lambda_t^i, \boldsymbol\theta_t)$,
where $\eta^{\ast}(\lambda_t^i, \boldsymbol\theta_t)$ attains the infimum.
Figure \ref{fig: dro} presents the features of participative users derived from the DRO formulation. 
In this illustration, the features of these users are predominantly clustered around the decision boundary, which corresponds to the maximum loss.

\begin{figure}[t]
\centering
\begin{minipage}{.4\linewidth}
\centering
\includegraphics[width=\linewidth]{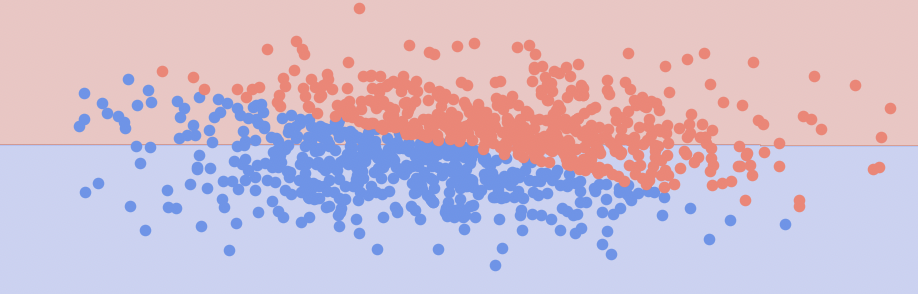}
(a): Population features
\end{minipage}
\begin{minipage}{.4\linewidth}
\centering
\includegraphics[width=\linewidth]{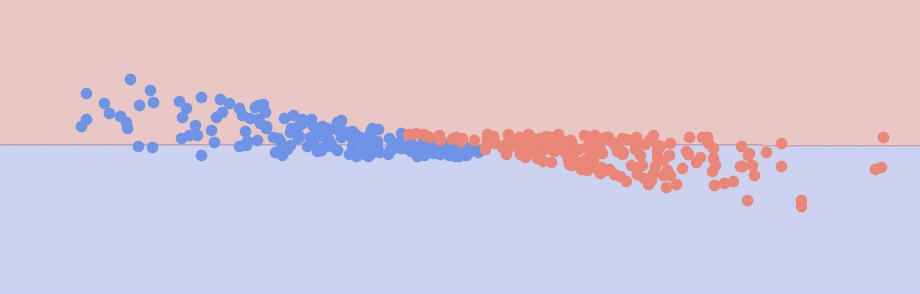}
(b): DRO features
\end{minipage}
\caption{
(a) and (b) plot the features of the entire population and participative user features resulting from DRO, respectively.
}
\label{fig: dro}
\end{figure}

The subsequent Proposition establishes that the DRO formulation offers a worst-case guarantee for the trajectory of population density that emerges from the surrogate retention system outlined in Eq.~\eqref{eq: surrogate retention system}.
The derivation is presented in Appendix \ref{sec: proof for the worst-case guarantee}.

\begin{restatable}{proposition}{WorstCaseGuarantee}
\label{theorem: worst-case guarantee}
Consider $\lambda_0^i$, $\lambda_1^i$,...,$\lambda_T^i$ as population densities derived from the surrogate retention system utilizing the DRO formulation, and $\hat{\lambda}_0^i$, $\hat{\lambda}_1^i$,...,$\hat{\lambda}_T^i$ as the sequence from the same system when population risk is applied.
Then,
\begin{equation*}
\lambda_t^i \leq \hat{\lambda}_t^i, 
\;\;
\forall t \in [0, T],
\;\;
i \in [1, K].
\end{equation*}
\end{restatable}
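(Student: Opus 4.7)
The plan is to proceed by induction on the time index $t$. The base case $t=0$ is immediate since both sequences are initialized from the same $\lambda_0^i = \hat\lambda_0^i$. The inductive step is where the bulk of the work lies, and it will rest on two structural observations about the DRO formulation combined with a coupling argument.

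First, at every time step the DRO loss dominates the population risk: $\kappa^i(\lambda,\boldsymbol\theta) \geq \mathbb{E}_{\mathcal{P}^i}[\Phi(\boldsymbol\theta,\mathbf{x},y)]$ for every $\lambda \in (0,1]$, because the chi-squared ball $\mathcal{B}(\mathcal{P}^i, r_t^i)$ always contains the nominal distribution $\mathcal{P}^i$, so the supremum over the ball is at least the expectation at $\mathcal{P}^i$. Second, $\kappa^i(\cdot,\boldsymbol\theta)$ is non-increasing in $\lambda$, because the radius $r_t^i = (1/\lambda - 1)^2$ shrinks as $\lambda$ grows and a supremum over a smaller ball is bounded by a supremum over a larger one. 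Together with the modeling assumption of Section~\ref{sec:surrogate retention system} that $\beta$ and $\sigma$ are non-increasing in their loss argument (the stated inverse proportionality between retention/birth rates and risk), these yield pointwise inequalities $\beta(\kappa^i_{\mathrm{DRO}}(\lambda,\boldsymbol\theta)) \leq \beta(\kappa^i_{\mathrm{pop}}(\boldsymbol\theta))$ and analogously for $\sigma$, whenever the same $\lambda$ is used.

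To push the induction step from $\lambda_t^i \leq \hat\lambda_t^i$ to $\lambda_{t+1}^i \leq \hat\lambda_{t+1}^i$, I would introduce an intermediate quantity $\tilde\lambda_{t+1}^i$ obtained by running the DRO dynamics but evaluated at $\hat\lambda_t^i$ in place of $\lambda_t^i$, and establish the chain $\lambda_{t+1}^i \leq \tilde\lambda_{t+1}^i \leq \hat\lambda_{t+1}^i$. The right inequality is immediate from the pointwise $\beta,\sigma$ comparison above, since the coefficients $(1-\hat\lambda_t^i)$ and $\hat\lambda_t^i$ are non-negative. For the left inequality, using that $\kappa^i_{\mathrm{DRO}}$ is non-increasing in $\lambda$ first pushes $\lambda_{t+1}^i$ up to the expression with $\beta,\sigma$ evaluated at $\kappa^i_{\mathrm{DRO}}(\hat\lambda_t^i,\boldsymbol\theta_t)$ but coefficients still $(1-\lambda_t^i), \lambda_t^i$. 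The remaining difference with $\tilde\lambda_{t+1}^i$ factors cleanly as $(\sigma-\beta)(\hat\lambda_t^i-\lambda_t^i)$, which has the desired sign provided $\sigma \geq \beta$.

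The main obstacle is precisely this mixing of $(1-\lambda)$ and $\lambda$ in the update: the map $\lambda \mapsto \beta(1-\lambda) + \sigma\lambda$ is monotone in $\lambda$ only when $\sigma \geq \beta$, and without such a structural hypothesis one can construct examples where a smaller $\lambda_t^i$ produces a larger $\lambda_{t+1}^i$ and the induction fails. I would therefore explicitly invoke the natural retention-dynamics assumption $\sigma \geq \beta$, which is consistent with the transition law in Eq.~\eqref{eq: population retention system} (active users with correct predictions survive with probability one, while inactive users reactivate only with the small probability $\alpha$), and then close the induction by stringing together the pointwise $\kappa$-domination, the monotonicity of $\kappa^i_{\mathrm{DRO}}$ in $\lambda$, and the coupling through $\tilde\lambda_{t+1}^i$.
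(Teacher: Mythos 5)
Your proposal shares the paper's two basic ingredients --- the nesting of chi-squared balls ($r=0$ gives the nominal distribution, so the DRO supremum dominates the population risk pointwise) and the monotonicity of $\beta,\sigma$ in the loss argument --- but it then diverges in the inductive step, and the divergence matters. The paper's induction, as written, only compares the two one-step updates evaluated at the \emph{same} density $\lambda_t^i$; it never invokes the inductive hypothesis $\lambda_t^i \leq \hat\lambda_t^i$, so it merely re-proves the base case at time $t$ and leaves unaddressed the fact that $\hat\lambda_{t+1}^i$ is computed from $\hat\lambda_t^i$ rather than from $\lambda_t^i$. Your coupling through the intermediate quantity $\tilde\lambda_{t+1}^i$ (DRO dynamics started from $\hat\lambda_t^i$) is exactly the missing piece: the right inequality $\tilde\lambda_{t+1}^i \leq \hat\lambda_{t+1}^i$ is the paper's pointwise comparison, and the left inequality $\lambda_{t+1}^i \leq \tilde\lambda_{t+1}^i$ is what actually propagates the ordering forward in time.

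Your diagnosis of where the left inequality can fail is also correct and not pedantic. The one-step map $\lambda \mapsto \beta(\kappa(\lambda))(1-\lambda) + \sigma(\kappa(\lambda))\lambda$ need not be monotone in $\lambda$: the monotonicity of $\kappa^i$ in $\lambda$ (shrinking radius) pushes in the right direction, but the residual term $(\sigma-\beta)(\hat\lambda_t^i - \lambda_t^i)$ has the required sign only when $\sigma \geq \beta$, and one can cook up admissible $\beta,\sigma$ (e.g., $\sigma \equiv 0$, $\beta(\kappa)=1-\kappa$) for which the ordering reverses after two steps. So the proposition as stated implicitly needs a structural hypothesis of this kind, which the paper neither states nor uses; your explicit invocation of $\sigma \geq \beta$, justified by the retention semantics of Eq.~\eqref{eq: population retention system}, is what makes the induction close. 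In short: same skeleton, but your version supplies a step the paper's proof is actually missing, at the price of one additional (and natural) assumption.
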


\subsection{Optimal Control Formulation for Asymptotically Fair Participation}
\label{sec:optimal control formualtion}
The definition of asymptotically fair participation, as outlined in Definition \ref{def:fairness definition}, requires that over an infinite period, the population densities of each demographic group approach and stabilize at $1$. 
The subsequent Proposition confirms that an equilibrium state signified by $\boldsymbol\lambda_t = \mat{1}$ is stable under a certain condition. 
This suggests that upon reaching the equilibrium state of $\mat{1}$, the population densities will remain in this state.
\begin{restatable}{proposition}{StableEquilibrium}
\label{proposition: stability of an fair equilibrium state}
In the surrogate retention system as described by Eq.~\eqref{eq: surrogate retention system}, a equilibrium state with $\boldsymbol\lambda_t = \mat{1}$ is stable if the following condition holds,
\begin{equation*}
\max_{i \in [1,2,...,K]}
\frac{\partial \sigma} {\kappa^i(\lambda_t^i, \boldsymbol\theta_t)}
\cdot
\frac{\partial \eta^{\ast}} {\partial \lambda_t^i}
\cdot
\Big(
1 -
\frac{\mathbb{E}_{\mathcal{P}^i} \big[\Phi(\boldsymbol\theta_t, \mat{x}, y)\big] }
{\sqrt{
\mathbb{E}_{\mathcal{P}^i} \big[\Phi(\boldsymbol\theta_t, \mat{x}, y)^2 \big]
}}
\Big)
< 1,
\end{equation*}
where $\eta^{\ast}$ is the optimal $\eta$ that achieves the infimum of the DRO dual expression.
\end{restatable}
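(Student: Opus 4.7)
The plan is to establish stability of $\boldsymbol\lambda_t = \mat{1}$ by the discrete-time Lyapunov indirect method: the equilibrium is locally asymptotically stable if the spectral radius of the Jacobian $J := \partial M(\boldsymbol\lambda, \boldsymbol\theta_t)/\partial \boldsymbol\lambda$ evaluated at $\boldsymbol\lambda = \mat{1}$ is strictly less than $1$. Because $\kappa^i(\cdot, \boldsymbol\theta_t)$ depends only on the $i$-th coordinate $\lambda_t^i$ and each row of $M$ in Eq.~\eqref{eq: surrogate retention system} likewise couples only to its own coordinate, $J$ is diagonal. Hence its spectral radius coincides with $\max_i |\partial M_i/\partial \lambda_t^i|$, which is exactly the ``$\max_i$'' structure in the stated condition; I will show that the left-hand side of the claimed inequality bounds the nontrivial, control-sensitive part of $\partial M_i/\partial \lambda_t^i$.

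First I would differentiate $M_i(\lambda_t^i, \boldsymbol\theta_t) = \beta(\kappa^i)(1 - \lambda_t^i) + \sigma(\kappa^i) \lambda_t^i$ by the product and chain rules. Evaluating at $\lambda_t^i = 1$ annihilates the $(1 - \lambda_t^i)$ factor and, using the equilibrium condition $\sigma(\kappa^i(1, \boldsymbol\theta_t)) = 1$ (necessary for $\mat{1}$ to be a fixed point), the autonomous terms consolidate so that the only term carrying nontrivial dependence on $\lambda_t^i$ is $\frac{\partial \sigma}{\partial \kappa^i} \cdot \frac{\partial \kappa^i}{\partial \lambda_t^i}$. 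This reduces the stability check to bounding the magnitude of $\frac{\partial \sigma}{\partial \kappa^i} \cdot \frac{\partial \kappa^i}{\partial \lambda_t^i}$ uniformly over $i$, leaving the computation of $\partial \kappa^i/\partial \lambda_t^i$ as the substantive step.

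For that derivative I would use the DRO dual of Eq.~\eqref{eq: dual of worst-case distribution loss}. Writing $f(\lambda, \eta) = C(\lambda) \big(\mathbb{E}_{\mathcal{P}^i}[[\Phi - \eta]_+^2]\big)^{1/2} + \eta$, we have $\kappa^i(\lambda, \boldsymbol\theta_t) = f(\lambda, \eta^{\ast}(\lambda))$. By the envelope theorem the $\partial f/\partial \eta$ contribution to the total derivative vanishes at the minimizer $\eta^{\ast}$, while differentiating the first-order condition $\partial f/\partial \eta|_{\eta^{\ast}} = 0$ implicitly in $\lambda_t^i$ relates $\partial \eta^{\ast}/\partial \lambda_t^i$ to $\partial C/\partial \lambda_t^i$ through the first and second $\mathcal{P}^i$-moments of $[\Phi - \eta^{\ast}]_+$. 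Substituting this relation into the expression for $\partial \kappa^i/\partial \lambda_t^i$ and simplifying the resulting quotient produces the factor $1 - \mathbb{E}_{\mathcal{P}^i}[\Phi]/\sqrt{\mathbb{E}_{\mathcal{P}^i}[\Phi^2]}$ that appears in the stated condition; combining this with the simplified Jacobian from the previous paragraph yields the stated inequality.

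The main obstacle will be this last step: $\lambda_t^i = 1$ is a boundary point where $C(1) = 1$ and the infimum in Eq.~\eqref{eq: dual of worst-case distribution loss} is not attained in the interior (as $\eta \to -\infty$ the objective merely converges to $\mathbb{E}_{\mathcal{P}^i}[\Phi]$). The envelope and implicit-function computations therefore have to be carried out at $\lambda_t^i$ strictly inside $(0,1)$ and then interpreted as a one-sided limit as $\lambda_t^i \uparrow 1$. The sign information needed for the contraction bound must also be tracked carefully, exploiting the monotonicity assumptions built into the surrogate retention model (survival rate decreasing in loss, loss decreasing in population density) so that the product $\frac{\partial \sigma}{\partial \kappa^i} \cdot \frac{\partial \eta^{\ast}}{\partial \lambda_t^i}$ has the correct sign to make the proposition's inequality a genuine contraction condition rather than a vacuous one.
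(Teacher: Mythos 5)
Your overall skeleton matches the paper's: both arguments linearize the surrogate retention system at $\boldsymbol\lambda_t=\mat{1}$, observe that the Jacobian is diagonal because the $i$-th row of $M$ couples only to $\lambda_t^i$, reduce the diagonal entry at $\lambda_t^i=1$ to $\frac{\partial\sigma}{\partial\kappa^i}\cdot\frac{\partial\kappa^i}{\partial\lambda_t^i}$ using the vanishing of the $(1-\lambda_t^i)$ factor and the normalization $\sigma=\beta=1$ at the equilibrium, and then differentiate the DRO dual to evaluate $\frac{\partial\kappa^i}{\partial\lambda_t^i}$. Up to that point your plan is sound (and your use of the spectral radius with absolute values is actually more careful than the paper's).

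The gap is in the last step, and it is not a technicality: your plan to compute $\frac{\partial\kappa^i}{\partial\lambda_t^i}$ via the envelope theorem is incompatible with the expression you are trying to reach. Writing $f(\lambda,\eta)=C(\lambda)\,g(\eta)+\eta$ with $g(\eta)=\big(\mathbb{E}_{\mathcal{P}^i}[[\Phi-\eta]_+^2]\big)^{1/2}$, the envelope theorem gives $\frac{d}{d\lambda}f(\lambda,\eta^{\ast}(\lambda))=\frac{\partial C}{\partial\lambda}\,g(\eta^{\ast})$ precisely because the first-order condition $C\,g'(\eta^{\ast})+1=0$ annihilates every term proportional to $\frac{\partial\eta^{\ast}}{\partial\lambda}$. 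But the factor you need, $\frac{\partial\eta^{\ast}}{\partial\lambda_t^i}\big(1-\mathbb{E}_{\mathcal{P}^i}[\Phi]/\sqrt{\mathbb{E}_{\mathcal{P}^i}[\Phi^2]}\big)$, is exactly the sum $\big(C\,g'(\eta)+1\big)\frac{\partial\eta^{\ast}}{\partial\lambda}$ evaluated at $C=1,\ \eta=0$ --- i.e.\ the residual of the first-order condition at a point where that condition does \emph{not} hold. The paper obtains it by taking the total derivative without invoking the envelope cancellation, substituting $C(1)=1$, $\partial C/\partial\lambda|_{\lambda=1}=0$, and $\eta^{\ast}=0$ directly. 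If you instead carry out your interior computation and take the one-sided limit $\lambda_t^i\uparrow 1$ as you propose, the envelope expression $\frac{\partial C}{\partial\lambda}\,g(\eta^{\ast})$ is a $0\cdot\infty$ form (since $\eta^{\ast}\to-\infty$ there, as you yourself note) whose limit works out to $-\sqrt{2\,\mathrm{Var}_{\mathcal{P}^i}(\Phi)}$, not to the paper's expression; the term $\frac{\partial\eta^{\ast}}{\partial\lambda_t^i}$ never survives. So the claim that ``substituting this relation \dots produces the factor $1-\mathbb{E}_{\mathcal{P}^i}[\Phi]/\sqrt{\mathbb{E}_{\mathcal{P}^i}[\Phi^2]}$'' is the step that fails: you would either have to abandon the envelope theorem and follow the paper's direct substitution at $\lambda_t^i=1$ (accepting its assertion that $\eta^{\ast}=0$ there), or you would end up proving a stability condition with a different, variance-based left-hand side.
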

The detailed proof is derived in Appendix \ref{sec: proof for the stability of the equilibrium state}.
The Proposition indicates that the stability of the population density state is dependent on the variance of $\Phi(\boldsymbol\theta_t, \mat{x}, y)$ across the entire population. This is logical because when each user experiences comparable losses, the DRO formulation becomes more consistent with the fluctuations in the population density $\lambda_t^i$.

In a practical scenario where there's a finite time frame, 
the condition of achieving asymptotically fair participation is equivalent to reaching a terminal state with $\boldsymbol\lambda_t = \mat{1}$ due to the stability of the equilibrium state.
To achieve this, it's beneficial to view model parameters across every single time step as time-dependent control variables. 
By doing so, we can formulate this into a trajectory optimization problem. 
In essence, this optimization challenge revolves around identifying the most effective sequence of controls in order to optimize the terminal state. 
This process is subject to the surrogate retention system, as defined in Eq.\eqref{eq: surrogate retention system}. 
To delve deeper into specifics, 
we use the symbol $\Psi(\boldsymbol\lambda_T)$ to represent a certain measurement of the terminal state, denoted by $\boldsymbol\lambda_T$ (e.g. $\Psi(\boldsymbol\lambda_T)$ is equivalent to the reward function $R(S_t)$, in which the reward function acts on the state, and $\Psi(\boldsymbol\lambda_T)$ acts on population densities that are computed from the state $S_t$). 
As an example, $\Psi(\boldsymbol\lambda_T) = \sum_{i=1}^K \lambda_T^i$ can be the sum of population densities at the terminal step.
Alternatively, one might interpret $\Psi(\boldsymbol\lambda_T)$ as the negative binary cross-entropy loss when comparing $\boldsymbol\lambda_T$ with a vector of ones, denoted by $\boldsymbol{1}$. 
The rationale behind this is that maximizing this particular measurement aligns with our original goal: maximizing the population densities of all groups at the final time step.
Then the objective of achieving asymptotically fair participation can be formulated as follows:
\begin{equation}
\label{eq:progressive fairness objective}
\max\limits_{ \{ \boldsymbol\theta_t \}_{t=0}^{T-1} } \Psi(\boldsymbol\lambda_T)
\;\;
{\rm s.t.}
\boldsymbol\lambda_{t+1} = M(\boldsymbol\lambda_t, \boldsymbol\theta_t),
\;\; {\rm given} \; \boldsymbol\lambda_0,
\end{equation}
where $M(\cdot)$ is the surrogate retention system defined in Eq.~\eqref{eq: surrogate retention system}.
This is a special case of a class of general optimal control problems for discrete dynamical systems,
in which we consider the control variables as the model parameters at all time steps.
From this optimal control perspective,
asymptotically fair participation can be achieved by solving for a set of controls such that Definition \ref{def:fairness definition} is satisfied.

We describe a general solver for the optimal control problem in Eq.~\eqref{eq:progressive fairness objective} based on PMP.
PMP \citep{pontryagin1987mathematical} consists of two difference equations and a maximization condition. 
Instead of computing the state-dependent closed-loop control function, the PMP solves for a set of fixed control parameters for every initial state.
To begin with, 
we define the Hamiltonian as

\begin{equation}
\label{eq: hamiltonian}
    H(t, \boldsymbol\lambda_t, \mat{p}_{t+1}, \boldsymbol\theta_t) \vcentcolon = \mat{p}_{t+1}^{T} \cdot M(\boldsymbol\lambda_t, \boldsymbol\theta_t) - {\cal L}(\boldsymbol\theta_t, \boldsymbol\lambda_t),
\end{equation}
where ${\cal L}(\boldsymbol\theta_t, \lambda_t^i)$ is a running loss at time $t$,
which can be defined as the regularization term of model parameters (We will discuss this running loss term in Section \ref{sec: theoretical analysis}).
The PMP consists of a two-point boundary value problem,

\begin{align}
    & \boldsymbol\lambda_{t+1}^{\ast} = \nabla_p H(t, \boldsymbol\lambda_t^{i, \ast}, \mat{p}_t^{\ast}, \boldsymbol\theta_t^{\ast}),
    \;\;\;\;
    \boldsymbol\lambda_0 \;\; {\rm given},
    \label{eq:pmp forward} \\
    & \mat{p}_t^{\ast} = \nabla_{\lambda} H(t, \boldsymbol\lambda_t^{i, \ast}, \mat{p}_{t+1}^{\ast}, \boldsymbol\theta_t^{\ast}),
    \;\;\;\;
    \mat{p}_T = \frac{\partial \Psi(\boldsymbol\lambda_T)} {\partial \boldsymbol\lambda_T},
    \label{eq:pmp backward}
\end{align}
plus a maximum condition of the Hamiltonian.

\begin{equation}
    H(t, \boldsymbol\lambda_t^{i, \ast}, \mat{p}_t^{\ast}, \boldsymbol\theta_t^{\ast}) \geq H(t, \boldsymbol\lambda_t^{i, \ast}, \mat{p}_t^{\ast}, \boldsymbol\theta_t), \;\; \forall \; \boldsymbol\theta_t \;{\rm and}\; t.
    \label{eq:pmp maximization}
\end{equation}

We consider the method of successive approximation \citep{kirk1970optimal, li2018maximum, JMLR:v23:22-0529} to solve for the control solution.
Given a initial condition $\boldsymbol\lambda_0$,
notice in Eq.~\eqref{eq:pmp forward},
\begin{equation*}
\boldsymbol\lambda_{t+1}^{\ast} = \nabla_p H(t, \boldsymbol\lambda_t^{i, \ast}, \mat{p}_t^{\ast}, \boldsymbol\theta_t^{\ast})
=
M(\boldsymbol\lambda_t^{\ast}, \boldsymbol\theta_t),
\end{equation*}
which is equivalent to the forward propagation of the surrogate retention system.
Once we reach the terminal state $\boldsymbol\lambda_T$,
the adjoint system defined in Eq.~\eqref{eq:pmp backward} is a difference equation that propagates the derivative of the terminal loss w.r.t. state $\boldsymbol\lambda_t$ at every time step $t$.
The adjoint state at each time step can be represented as
\begin{align*}
\frac{\partial \Psi(\boldsymbol\lambda_T)} {\partial \boldsymbol\lambda_t}
& =
\frac{\partial \Psi(\boldsymbol\lambda_T)} {\partial \boldsymbol\lambda_T}
\cdot
\frac{\partial \boldsymbol\lambda_T} {\partial \boldsymbol\lambda_{T-1}}
\cdots
\frac{\partial \boldsymbol\lambda_{t+2}} {\partial \boldsymbol\lambda_{t+1}}
\cdot
\frac{\partial \boldsymbol\lambda_{t+1}} {\partial \boldsymbol\lambda_t},
\\
& = 
\frac{\partial \Psi(\boldsymbol\lambda_T)} {\partial \boldsymbol\lambda_{t+1}} ^T \cdot \frac{\partial \boldsymbol\lambda_{t+1}} {\partial \boldsymbol\lambda_t},
\\
& =
\mat{p}_{t+1}^T \cdot \frac{\partial M(\boldsymbol\lambda_t, \boldsymbol\theta_t)} {\partial \boldsymbol\lambda_t},
\end{align*}
which resembles the adjoint system defined in Eq.~\eqref{eq:pmp backward}.
Once we obtain the state $\boldsymbol\lambda_t$ and adjoint state $\mat{p}_t$,
the Hamiltonian can be optimized with respect to model parameters $\boldsymbol\theta_t$,
\begin{equation*}
\boldsymbol\theta_t^{\ast}
=
\arg\max_{\boldsymbol\theta} H(t, \boldsymbol\lambda_t, \mat{p}_{t+1}, \boldsymbol\theta).
\end{equation*}
this can be solved via any optimization method (e.g. gradient ascent).

Instead of iterating through all three Hamiltonian dynamics for a single update on the control solutions,
we can consider optimizing the $\rm t^{th}$ Hamiltonian locally for all $t \in [0,\cdots,T-1]$ with the current state $\boldsymbol\lambda_t$ and adjoint state $\mat{p}_{t+1}$.
This allows the control solution $\boldsymbol\theta_t$ to be updated multiple times within one complete iteration.
Once a locally optimal control $\boldsymbol\theta_t^{\ast}$ is achieved by maximizing $H(t, \boldsymbol\lambda_t^{i, \ast}, \mat{p}_{t+1}^{\ast}, \boldsymbol\theta_t^{\ast})$,
the adjoint state $\mat{p}_{t + 1}$ is backpropagated to $\mat{p}_t$ via the adjoint dynamic in Eq.~\eqref{eq:pmp backward} followed by maximizing $H(t-1, \boldsymbol\lambda_{t-1}^{i, \ast}, \mat{p}_t^{\ast}, \boldsymbol\theta_{t-1}^{\ast})$. 
In this configuration, executing the Hamiltonian dynamics $n$ times can be decomposed into $maxItr$ complete iterations and $InnerItr$ local updates.
Alg.~\ref{alg:control solver implementation} presents the method of successive approximation that solves the PMP iteratively.

\begin{algorithm}[tb]
   \caption{Method of Successive Approximation.}
   \label{alg:control solver implementation}
\begin{algorithmic}
    \State {\bfseries Input:} $\boldsymbol\lambda_0$,
    learning rate ${\rm lr}$,
    ${\rm maxItr}$, 
    ${\rm InnerItr}$
    \State {\bfseries Output:} models $\{ \boldsymbol\theta_t \}_{t=0}^T$
    \For{$m = 1$ to ${\rm maxItr}$}
    \For{$t = 0$ to T-1}
    \State $\boldsymbol\lambda_{t+1}^{\ast} = \nabla_p H(t, \boldsymbol\lambda_t^{i, \ast}, \mat{p}_t^{\ast}, \boldsymbol\theta_t^{\ast})$,
    {\color{blue} // Forward propagation of number densities (Eq.~\eqref{eq:pmp forward}).}
    \EndFor
    \State $\mat{p}_T^{\ast} = \frac{\partial \Psi(\boldsymbol\lambda_T)} {\partial \boldsymbol\lambda_T}$, {\color{blue} // Set terminal condition.}
    \For{$t=T-1$ to $0$}
    \For{$\tau = 0$ to ${\rm InnerItr}$}
    \State
    $H(t, \boldsymbol\lambda_t, \mat{p}_{t+1}, \boldsymbol\theta_t) = \mat{p}_{t+1}^{T} \cdot M(\boldsymbol\lambda_t, \boldsymbol\theta_t)$,
    {\color{blue} // Compute $H$ with $\mat{p}_{t+1}$ and $\boldsymbol\lambda_t$.}
    \State
    $\boldsymbol\theta_{t}^{\rm new} =
    \arg \max \limits_{\boldsymbol\theta} H(t, \boldsymbol\lambda_t, \mat{p}_{t+1}, \boldsymbol\theta_t)$,
    {\color{blue} // Maximize Hamiltonian (Eq.~\eqref{eq:pmp maximization}).}
    \EndFor
    \State
    $\mat{p}_t^{\ast} = \nabla_{\boldsymbol\lambda} H(t, \boldsymbol\lambda_t^{i, \ast}, \mat{p}_{t+1}^{\ast}, \boldsymbol\theta_t^{\ast})$,
    {\color{blue} // Backward propagation of adjoint state (Eq.~\eqref{eq:pmp backward}).}
    \EndFor
    \EndFor
\end{algorithmic}
\end{algorithm}

\subsection{Theoretical Analysis}
\label{sec: theoretical analysis}
In this section, we formulate an objective function designed to ensure that population densities do not decrease with each update of the model parameters. 
Through our theoretical analysis, we introduce a new form of the Hamiltonian, which leads to better convergence.

We consider an infinity horizon discounted reward setting,
where the reward function $R(S_t)$ is defined as a measurement of the population densities at time step $t$, as in Section \ref{sec: problem description} (e.g. the sum of population densities).
Moreover,
we use $M^{\ast}$ to indicate the population retention system defined in Eq.~\eqref{eq: population retention system} and $M$, $M'$ as its estimations.
We denote $V^{\boldsymbol\theta, M}$ as the value function of predictive model $\boldsymbol\theta = \{\boldsymbol\theta_t\}_{t=0}^{T-1}$ and dynamical system $M$,
\begin{equation*}
V^{\boldsymbol\theta, M}(s)
=
\mathbb{E}_{S_{t+1} \sim M(S_{t+1} | S_t, \boldsymbol\theta(S_t))}
\Big[
\sum_{t=0}^{\infty} \gamma^t R(S_t) | S_0 = s
\Big],
\end{equation*}
where the models $\boldsymbol\theta$ generate deterministic predictions,
$\gamma$ is a discounting factor.

\begin{restatable}{theorem}{maintheoremNonDecreaseValueFunction}
\label{theorem: nondecrease value function}
Let the value function satisfy L-Lipschitz continuity with a Lipschitz constant $L$.
Suppose $M^{\ast}$, representing a population retention system, is an element of the set $\mathcal{M}$, which denotes the space of estimated systems under consideration. 
When the optimality of the following objective is achieved,
\begin{gather*}
\boldsymbol\theta^{\rm new}, M^{\rm new}
=
\arg \max \limits_{\boldsymbol\theta, M}
V^{\boldsymbol\theta, M}
-
\Big(
\gamma \cdot L \cdot
\mathbb{E}_{\substack{S \sim \rho^{\boldsymbol\theta^{\rm old}, M^{\ast}}}}
\Big[
\lVert M(S, \boldsymbol\theta(S)) - M^{\ast}(S, \boldsymbol\theta(S)) \rVert
\Big]
+
\frac{2 B \gamma \kappa} {1 - \gamma}
\Big)
\\
{\rm s.t.}
\;\;
KL(\boldsymbol\theta^{\rm old}(S), \boldsymbol\theta(S))^{\frac{1} {2}} \leq \kappa,
\end{gather*}
where $\rho^{\boldsymbol\theta^{\rm old}, M^{\ast}}$ represents the stationary state distribution of the population retention system $M^{\ast}$ and model $\boldsymbol\theta^{\rm old}$,
$\lVert S \rVert \leq B$.
Then we have a non-decreasing value function of the population retention system from the resulting models,
\begin{equation*}
V^{\boldsymbol\theta^{\rm old}, M^{\ast}} \leq V^{\boldsymbol\theta^{\rm new}, M^{\ast}}.
\end{equation*}
\end{restatable}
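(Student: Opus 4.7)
The plan is to recognize the penalized expression in the theorem as a provable pointwise lower bound on $V^{\boldsymbol\theta, M^{\ast}}$ valid for every $(\boldsymbol\theta, M)$ satisfying the KL constraint, in the spirit of Kakade--Langford conservative policy iteration and the monotone-improvement analyses used in TRPO and MBPO. Once such a surrogate lower bound is in place, the conclusion follows from a short chaining argument: the pair $(\boldsymbol\theta^{\rm old}, M^{\ast})$ is trivially feasible (the KL of a distribution with itself is zero, and $M^{\ast} \in \mathcal{M}$), the model-error term vanishes at this pair, and the common additive $\tfrac{2B\gamma\kappa}{1-\gamma}$ constant cancels across the two sides of the inequality; optimality of the surrogate at $(\boldsymbol\theta^{\rm new}, M^{\rm new})$ together with the lower-bound property then sandwich $V^{\boldsymbol\theta^{\rm new}, M^{\ast}}$ above $V^{\boldsymbol\theta^{\rm old}, M^{\ast}}$.

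The surrogate bound itself rests on two classical ingredients. The first is a simulation-lemma bound: starting from the identity $V^{\boldsymbol\theta, M}(s) - V^{\boldsymbol\theta, M^{\ast}}(s) = \gamma\bigl[V^{\boldsymbol\theta, M}(M(s,\boldsymbol\theta(s))) - V^{\boldsymbol\theta, M^{\ast}}(M^{\ast}(s,\boldsymbol\theta(s)))\bigr]$ derived from the Bellman equations, splitting through the intermediate quantity $V^{\boldsymbol\theta, M^{\ast}}(M(s,\boldsymbol\theta(s)))$, and applying the hypothesized $L$-Lipschitz continuity of $V^{\boldsymbol\theta, M}$ yields a one-step bound; iterating and integrating against the discounted occupancy $\rho^{\boldsymbol\theta, M^{\ast}}$ gives
\begin{equation*}
V^{\boldsymbol\theta, M^{\ast}} \geq V^{\boldsymbol\theta, M} - \gamma L \, \mathbb{E}_{s \sim \rho^{\boldsymbol\theta, M^{\ast}}}\bigl[\lVert M(s, \boldsymbol\theta(s)) - M^{\ast}(s, \boldsymbol\theta(s))\rVert\bigr].
\end{equation*}
The second is an occupancy-shift lemma that migrates this expectation onto the reference $\rho^{\boldsymbol\theta^{\rm old}, M^{\ast}}$: the standard telescoping argument bounds the total variation between the two discounted occupancies by a $\gamma/(1-\gamma)$ multiple of the per-state policy total variation, and Pinsker's inequality $d_{TV} \leq \sqrt{KL/2}$ combined with the constraint $\sqrt{KL} \leq \kappa$ together with the state norm bound $\lVert S \rVert \leq B$ produce the additive correction $\tfrac{2B\gamma\kappa}{1-\gamma}$ required to match the form in the theorem.

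Assembling, the simulation lemma applied at $(\boldsymbol\theta^{\rm new}, M^{\rm new})$ followed by the occupancy-shift correction yields exactly the surrogate as a lower bound on $V^{\boldsymbol\theta^{\rm new}, M^{\ast}}$; combining this with the fact that the surrogate at $(\boldsymbol\theta^{\rm new}, M^{\rm new})$ is at least the surrogate at $(\boldsymbol\theta^{\rm old}, M^{\ast})$ by the optimality of the former, and then observing that the model-error term of the surrogate vanishes at $(\boldsymbol\theta^{\rm old}, M^{\ast})$ while the additive constant is common, closes the chain and yields $V^{\boldsymbol\theta^{\rm new}, M^{\ast}} \geq V^{\boldsymbol\theta^{\rm old}, M^{\ast}}$.

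The hard step will be the occupancy-shift bookkeeping: rerouting the simulation-lemma expectation from the natural $\rho^{\boldsymbol\theta^{\rm new}, M^{\ast}}$ onto the reference $\rho^{\boldsymbol\theta^{\rm old}, M^{\ast}}$ requires telescoping one-step TV distances across the discounted horizon and uniformly applying Pinsker's inequality, and getting the numerical constants to match the precise $\tfrac{2B\gamma\kappa}{1-\gamma}$ stated in the theorem demands careful accounting of how the state bound $\lVert S \rVert \leq B$ enters the TV-to-expectation conversion. A secondary subtlety lies in interpreting the KL divergence when $\boldsymbol\theta(s)$ is treated as a deterministic predictor; this likely requires viewing the model output as inducing a probability distribution over labels (for instance via the sigmoid output in binary classification) so that $KL(\boldsymbol\theta^{\rm old}(s), \boldsymbol\theta(s))$ is well defined. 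Establishing uniform $L$-Lipschitz continuity of $V^{\boldsymbol\theta, M}$ over the estimation class $\mathcal{M}$ is also needed as a preliminary regularity condition.
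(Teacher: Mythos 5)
Your proposal follows the paper's proof essentially step for step: the simulation-lemma identity for $V^{\boldsymbol\theta, M}-V^{\boldsymbol\theta, M^{\ast}}$ (the paper's Lemma \ref{lemma: discrepancy between two value functions}), the $L$-Lipschitz one-step bound integrated against the discounted occupancy (Proposition \ref{proposition: value function upper bound naive}), the occupancy-shift correction via the $\gamma/(1-\gamma)$ telescoping bound and Pinsker's inequality yielding the $\tfrac{2B\gamma\kappa}{1-\gamma}$ term (Corollaries \ref{corollary: bound between two distributions}--\ref{corollary: upper bound between two distributions} and Proposition \ref{proposition: value function upper bound refine}), and the final chaining through feasibility and optimality of $(\boldsymbol\theta^{\rm old}, M^{\ast})$. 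The one imprecise remark --- that the additive constant ``cancels,'' when in fact a constant penalty leaves a residual $\tfrac{2B\gamma\kappa}{1-\gamma}$ slack at the pair $(\boldsymbol\theta^{\rm old}, M^{\ast})$ unless the penalty is written in terms of the realized KL divergence --- is a looseness equally present in the paper's own concluding chain of inequalities, so your argument matches the paper's both in route and in level of rigor.
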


The detailed proof is provided in Appendix \ref{sec: derivation for nondecreasing value function}.
Theorem \ref{theorem: nondecrease value function} suggests an objective function that consists of maximizing the value function and minimizing the difference between the population retention system and the estimated surrogate retention system.
The maximization of the value function is done via solving the PMP (see Algorithm \ref{alg:control solver implementation}),
and optimizing the estimated system is done via collected simulation data.
The central message provided by this Theorem is on the regularization of model parameter updates,
this leads to a modified Hamiltonian of Eq.~\eqref{eq:pmp maximization},
\begin{equation*}
H(t, \boldsymbol\lambda_t, \mat{p}_{t+1}, \boldsymbol\theta_t) \vcentcolon = \mat{p}_{t+1}^{T} \cdot M(\boldsymbol\lambda_t, \boldsymbol\theta_t) - KL(\boldsymbol\theta_t^{\rm old}, \boldsymbol\theta_t),
\end{equation*}
where $\boldsymbol\theta_t^{\rm old}$ represents the predictive model resulting from previous update.
This is consistent with existing RL algorithms \citep{schulman2015trust, schulman2017proximal} as constraining the model parameter update from collected simulation data is beneficial for convergence.

\section{Numerical Experiments}
In this section, we detail two simulation environments to implement the population retention system in Section \ref{sec: simulation environment}. 
Additionally, three categories of baseline methods are discussed: fairness-agnostic, fairness-aware, and dynamic-aware, as presented in Section \ref{sec: Baseline Methods on Asymptotically Fair Participation}. 
Following that, we conduct an empirical validation of our optimal control solution using a synthetic dataset in Section \ref{sec:numerical experiment linear model}. 
We also explore two realistic datasets commonly used in fairness research, as outlined in Section \ref{sec:numerical experiment real-world datasets}.

\subsection{A Generic Platform for Fairness in Non-Stationary Environment}
\label{sec: simulation environment}
In all simulation environments,
we assume a positive association between the proportion of new users at each time point and the current population density. 
Simply put, a higher active user count attracts even more new users to the platform. 
This ensures an increase in population density as the performance of the model improves.
We developed two population retention systems.
\begin{itemize}
    \item 
    $M^{\ast}_1$:
    In this system,
    when a user decides to retain or churn, their decision follows 
    a Bernoulli distribution conditioned on the model performance of this user. 
    For instance, a user has a higher chance of retaining conditioned on correct model prediction, and low probability of staying engaged when a wrong model prediction is given. 
    \item 
    $M^{\ast}_2$:
    The second system takes a more complex approach to modeling user retention. 
    A user decides to churn out because the model has consecutively delivered several inaccurate predictions, for instance, three accumulated incorrect predictions, that particularly pertain to that individual. 
\end{itemize}

\begin{algorithm}[tb]
   \caption{Implementation of Population Retention System for Evaluation.}
   \label{alg: design of population retention system}
\begin{algorithmic}
    \State {\bfseries Input:} 
    Dataset $\{\mat{x}_n, y_n, z_n\}_{n=1}^N$, \\
    A sequence of models $\{ \boldsymbol\theta_t \}_{t=0}^{T-1}$,\\
    Initial population densities $\boldsymbol\lambda_0$.
    \State {\bfseries Output:}
    Population densities at all time step $\{ \boldsymbol\lambda_t \}_{t=0}^T$.
    \For{episode = 1 to max episodes}
    \State {\color{blue} // Set up an initial state by randomly sampling user indices as participative users.}
    \State Initialize $S_0$.
    \For{t = 1 to max time steps}
    \State {\color{blue} // Return the features of currently participative users.}
    \State Return $\mat{x}_n$ if $[S_t]_n = 1$.
    \State {\color{blue} // Model prediction based on the collected features.}
    \State Predict $\boldsymbol\theta_t(\mat{x}_n)$ if $[S_t]_n = 1$.
    \State {\color{blue} // Environment update for active users based on model predictions.}
    \State $S_{t+1} \sim M^{\ast}(\cdot | S_t, \{\boldsymbol\theta_t(\mat{x}_n)\}_{n=1}^N, \{y_n\}_{n=1}^N, \{z_n\}_{n=1}^N)$.
    \State {\color{blue} // Collect reward.}
    \State $R(S_t)$.
    \EndFor
    \EndFor
\end{algorithmic}
\end{algorithm}
Algorithm \ref{alg: design of population retention system} shows the implementation of the population retention system for evaluation.

\subsection{Baseline Algorithms on Asymptotically Fair Participation}
\label{sec: Baseline Methods on Asymptotically Fair Participation}
In this section, we delve into three categories of algorithms designed for achieving asymptotically fair participation: fairness-agnostic, fairness-aware, and dynamic-aware approaches. 
While fairness-agnostic algorithms employ empirical risk minimization, fairness-aware techniques utilize demographic data to ensure balanced model performance across diverse demographic groups. 
Additionally, dynamic-aware approaches consider the inherent population dynamics, typically leading to enhanced performance compared to the other types.

\paragraph{Fairness-agnostic:}
Empirical risk minimization (\textbf{ERM}) optimizes an average loss of all observable data,
\begin{equation*}
\boldsymbol\theta_t
=
\arg \min_{\boldsymbol\theta}
\frac{1} {\sum_{n=1}^N [S_t]_n} \sum_{n=1}^N
\Phi(\boldsymbol\theta_t, \mat{x}_n, y_n)
\cdot [S_t]_n,
\end{equation*}
where $S_t \in \{0, 1\}^N$ indicating the participative user indices (see Section \ref{sec: problem description}).
This method often results in undesirable outcomes in terms of fairness \citep{hashimoto2018fairness}.
Specifically, when there exists a disparity between the population densities of the majority and minority demographic groups at a particular time step,
ERM focuses on minimizing the average loss across all samples. Consequently, this produces a model that performs better for the majority group than for the minority group.
This not only accentuates the imbalance between the two demographic groups in subsequent time steps but also exacerbates the inherent bias in the model derived in the following steps.

\paragraph{Fairness-aware:}
Methods that prioritize fairness use demographic data to ensure equal model performance among various demographic groups. 
We detail two techniques from this group: Minimax optimization (\textbf{Minimax}), and distributional robust optimization (\textbf{DRO}).

\begin{itemize}
    \item 
    \textbf{Minimax} focuses on optimizing the most unfavorable outcome for all groups by using the demographic information of each sample \citep{diana2021minimax}.
    \begin{equation*}
    \boldsymbol\theta_t
    =
    \arg \min_{\boldsymbol\theta}
    \max_{i=[1,2,...,K]}
    \Big(
    \frac{1} {\sum_{n=1}^N [S_t]_n \cdot \mathbbm{1}_{z_n = i}} \sum_{n=1}^N  \Phi(\boldsymbol\theta_t, \mat{x}_n, y_n) \cdot [S_t]_n \cdot \mathbbm{1}_{z_n = i}
    \Big),
    \end{equation*}
    where $\mathbbm{1}_{z_n = i}$ is a indicator function that is used to select samples belonging to the $\rm i^{th}$ demographic group. 
    \item 
    \textbf{DRO} can be seen as a milder form of Minimax since it optimizes for the most unfavorable outcome over a specified proportion (represented by $\lambda$) of samples. 
    Notably, the loss from DRO is always greater than or equal to the loss from Minimax.
    \begin{equation*}
    \boldsymbol\theta_t = \arg \sup_{\mathcal{Q} \in \mathcal{B}(\mathcal{M}^i, r_t^i)} \mathbb{E}_{(\mat{x}, y) \sim \mathcal{Q}} \Phi(\boldsymbol\theta_t, \mat{x}, y), 
    \;\; 
    r_t^i = (1 / \lambda_t^i - 1)^2.
    \end{equation*}
    The DRO algorithm is implemented by the dual representation shown in Eq.~\eqref{eq: dual of worst-case distribution loss}.
\end{itemize}

\paragraph{Dynamic-aware:}
This category takes into account the underlying population evolution for optimal decision-making.
In general time-evolving environments, optimizing model performance at each time step often cannot lead to the optimal model subject to dynamic change.
In the MDP defined in Eq.~\eqref{eq: population retention system} and detailed in Section~\ref{sec: problem description},
the transition of users' behavior can lead to different decision-making algorithms.
We implement various reinforcement learning (RL) algorithms to build dynamic-aware models.

To begin with,
we first define the probability of transitioning from an initial state $S_0$ to any state $S$ under model $\{\boldsymbol\theta_t\}_{t=0}^{T-1}$ and system $M$ for $1$ step (we use $M^{\ast}$ to indicate the population retention system defined in Eq.~\eqref{eq: population retention system} and $M$ as its estimated surrogate retention system defined in Eq.~\eqref{eq: surrogate retention system}),
\begin{equation*}
P(S_0 \rightarrow S, 1, \boldsymbol\theta, M) = \int_A \boldsymbol\theta(A | S_0) M(S | S_0, A) dA, 
\end{equation*}
where $\boldsymbol\theta$ indicates a sequence of models $\{\boldsymbol\theta_t\}_{t=0}^{T-1}$.
More generally,
this transitioning probability admits a recursive form with any steps $t$,
a $t$-step probability transition can be represented as first transitioning to some intermediate state $S'$ after $t-1$ steps, then transitioning to $S$ for one more step,
\begin{equation*}
P(S_0 \rightarrow S, t, \boldsymbol\theta, M) 
= \int_{S'} P(S_0 \rightarrow S', t-1, \boldsymbol\theta, M) \int_A \boldsymbol\theta(A | S') M(S | S', A) dA dS'.
\end{equation*}
We define $\rho^{\boldsymbol\theta, M}$ the stationary state distribution of the MDP under model $\{\boldsymbol\theta_t\}_{t=0}^{T-1}$ and system $M$,
\begin{equation*}
\rho^{\boldsymbol\theta, M}
=
\int_{S_0} \mu(S_0) 
\int_{S} \sum_{t=0}^{\infty} \gamma^t \cdot P(S_0 \rightarrow S, t, \{\boldsymbol\theta_t\}_{t=0}^{T-1}, M).
\end{equation*}

We detail three RL algorithms: Naive policy gradient (\textbf{PG}),
trust region policy optimization (\textbf{TRPO}),
and proximal policy optimization (\textbf{PPO}).

\begin{itemize}
    \item 
    \textbf{PG} \citep{sutton1999policy} is implemented via Monte-Carlo sampling to generate the accumulated reward,
    \begin{equation*}
    \boldsymbol\theta_t = \arg \min_{\boldsymbol\theta} 
    \mathbb{E}_{S \sim \rho^{\boldsymbol\theta, M^{\ast}}}
    \bigg[
    \mathbb{E}_{A \sim \boldsymbol\theta(A|S)}
    \Big[
    G_t \nabla_{\boldsymbol\theta} \ln \boldsymbol\theta(A | S)
    \Big]
    \bigg],
    \end{equation*}
    where $\rho^{\boldsymbol\theta, M^{\ast}}$ is the stationary state distribution resulting from model $\{\boldsymbol\theta_t\}_{t=0}^{T-1}$ and the population retention system $M^{\ast}$,
    $G_t = \sum_{\tau=t}^T \gamma^{\tau-t} R(S_t, A_t)$ represents the accumulated reward from time step $t$,
    which is generated from collecting sample trajectories using the current model $\{\boldsymbol\theta_\tau\}_{\tau=t}^{T-1}$.
    This policy gradient implementation is sample-inefficient since every parameter update requires re-collecting the sample trajectory (the stationary state distribution depends on the current model $\{\boldsymbol\theta_t\}_{t=0}^{T-1}$).
    \item 
    \textbf{TRPO} \citep{schulman2015trust} makes updates that improve the model parameters while ensuring the new model doesn't deviate too much from the old one,
    it can produce more stable and reliable learning compared to vanilla policy gradient methods. 
    \begin{equation*}
    \boldsymbol\theta_t = \arg \min_{\boldsymbol\theta}
    \mathbb{E}_{S \sim \rho^{\boldsymbol{q}, M^{\ast}}}
    \bigg[
    \mathbb{E}_{A \sim \boldsymbol{q}(A | S)}
    \Big[
    \frac{\boldsymbol\theta(A | S)} {\boldsymbol{q}(A | S)}
    \Big]
    G_t
    \bigg],
    \end{equation*}
    where $\boldsymbol{q} = \{\boldsymbol{q}_t\}_{t=0}^{T-1}$ is the sequence of models at previous update.
    \item 
    \textbf{PPO} \citep{schulman2017proximal} aims to approximate the behavior of TRPO but in a more straightforward and computationally efficient manner.
    \begin{equation*}
    \boldsymbol\theta_t = \arg \min_{\boldsymbol\theta}
    \mathbb{E}_{\substack{S \sim \rho^{\boldsymbol{q}, M^{\ast}} \\ A \sim \boldsymbol{q}(A | S)}}
    \Big[
    \min \Big(
    \frac{\boldsymbol\theta(A | S)} {\boldsymbol{q}(A | S)},
    {\rm clip(\frac{\boldsymbol\theta(A | S)} {\boldsymbol{q}(A | S)}, 1 - \epsilon, 1 + \epsilon)}
    \Big)
    G_t
    \bigg],
    \end{equation*}
    where ${\rm clip(\cdot)}$ is the clip function.
    PPO simplifies and improves upon the TRPO method, offering a balance between ease of implementation and sample efficiency.
\end{itemize}


\subsection{Modeling with Synthetic Dataset}
\label{sec:numerical experiment linear model}


In this study, we employ both $M^{\ast}_1$ and $M^{\ast}_2$ defined in Section \ref{sec: simulation environment}, with a synthetic binary classification dataset.
As depicted in Fig.~\ref{fig: structure} (a),
the synthetic dataset is composed of two Gaussian blobs, each centered at different locations of a $2$-dimensional space, 
which formulate the feature distributions of two demographic groups. 
The blobs located on the left and right are denoted as the majority and minority demographic groups, respectively, with respective initial population densities of $0.6$ and $0.4$ (e.g. majority demographic group has a larger population density compared with the minority demographic group at the initial step).
All experiments are repeated with five random seeds.


\paragraph{Quantitative analysis:}
\begin{figure}[t]
\centering
\begin{minipage}{0.8\linewidth}
\centering
\includegraphics[width=\linewidth]{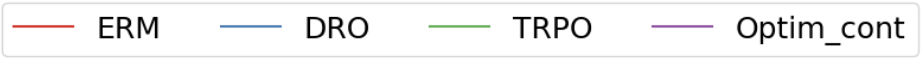}
\end{minipage}

\centering
\begin{minipage}{.27\linewidth}
\centering
\includegraphics[width=\linewidth]{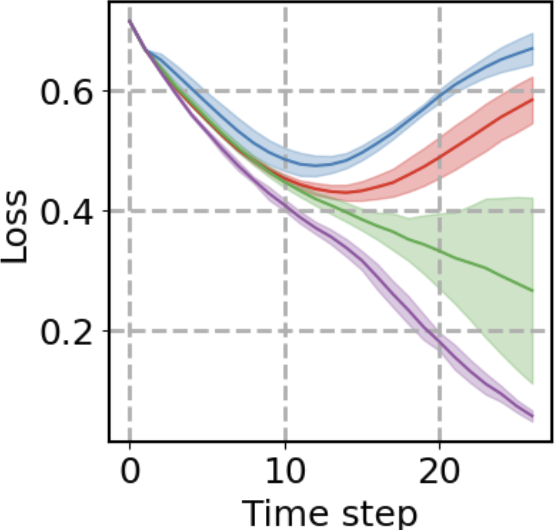}
(a): Loss in $\mathcal{P}_1$
\end{minipage}
\begin{minipage}{.27\linewidth}
\centering
\includegraphics[width=\linewidth]{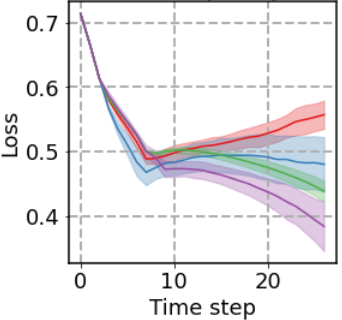}
(b): Loss in $\mathcal{P}_2$
\end{minipage}
\begin{minipage}{.29\linewidth}
\centering
\includegraphics[width=\linewidth]{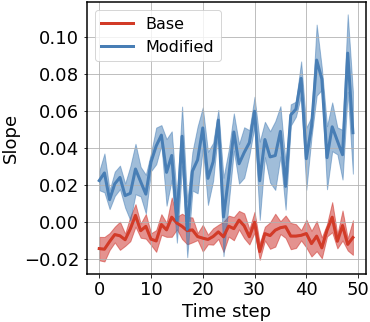}
(c): Slopes
\end{minipage}
\caption{
(a) and (b) plot the binary cross-entropy losses of population densities resulting from ERM, DRO, TRPO, and the proposed Optimal control method in simulation environments $M_1^{\ast}$ and $M_2^{\ast}$ respectively.
(c) plots model slopes resulting from the Optimal control method in the base and modified environments.
}
\label{fig: statistics synthetic dataset}
\end{figure}

We measure a trajectory of population densities (for example, $\boldsymbol\lambda_0$), $\boldsymbol\lambda_1$, ..., $\boldsymbol\lambda_T$) using the binary cross-entropy loss function (e.g. $\sum_{i=1}^K - \log (\lambda_t^i)$). 
To illustrate, during a specific time step, this measurement evaluates how closely a set of population densities approximates a vector where all values are $1$. 
This is based on the premise that the highest possible density for any demographic group is $1$. 
Deviations in population density significantly below $1$ are penalized by this metric.

The plots in Figure~\ref{fig: statistics synthetic dataset} illustrate the loss measures, focusing on the top-performing algorithm from each group: fairness-agnostic, fairness-aware, dynamic-aware, and optimal control methods. 
Specifically, in the simulation environment $M^{\ast}_1$, user churn is sensitive to model accuracy, as a single incorrect prediction can lead to churn with high probability. 
This sensitivity is depicted by the sharp increase in the loss trajectory when using ERM, as seen in Figure~\ref{fig: statistics synthetic dataset} (a). 
The DRO method, which is static and fairness-aware, fails to correct this undesired trend. 
In contrast, TRPO accounts for population dynamics and markedly improves upon the static methods, consistently boosting the population densities of both groups. 
The optimal control method, however, excels by optimally moderating the model across demographics, thereby substantially increasing minority densities with minimal impact on the majority. 
This is evaluated by the loss metrics in Figure~\ref{fig: statistics synthetic dataset} (a), where the optimal control method consistently shows lower losses at every step when compared to other methods.
Experimental results from simulation environment $M^{\ast}_2$, in Figures~\ref{fig: statistics synthetic dataset} (b), show comparable trends,
with smoother transitions.
This is due to the slower variation in user churn behavior specific to the environment $M^{\ast}_2$ (e.g. $3$ wrong prediction causes a user churn with a certain probability).
Moreover, Table \ref{table: terminal stats synthetic dataset} summarizes the terminal conditions of all baseline algorithms in both $M^{\ast}_1$ and $M^{\ast}_2$,
where the terminal population density of the minority group (Density-$2$, higher is better), the disparity between the two groups (Disparity, lower is better), and the terminal loss (Loss, lower is better) are presented.

\begin{table}[t]
\caption{Synthetic Dataset: Terminal State with Initial $\lambda_0^1=0.6$ and $\lambda_0^2=0.4$}
\begin{center}
\begin{tabular}{ p{2.cm}|p{1.2cm} p{1.5cm} p{1.2cm} p{1.2cm} p{1.2cm} p{1.2cm} p{1.2cm} }
\specialrule{1.5pt}{0pt}{0pt}
\multicolumn{8}{c}{Environment $M^{\ast}_1$} \\
\hline
& \multicolumn{1}{c}{Fair-agnostic} & \multicolumn{2}{c}{Fair-aware} & \multicolumn{3}{c}{Dynamic-aware} \\
\hline
& ERM & Minimax & DRO & PG & TRPO & PPO & Optim \\
\hline
Density-2 $\uparrow$ & 0.323 & 0.224 & 0.275 & 0.481 & 0.645 & 0.405 & \textcolor{blue}{0.920} \\
\hline
Disparity $\downarrow$ & 0.677 & 0.776 & 0.715 & 0.459 & 0.335 & 0.555 & \textcolor{blue}{0.05} \\
\hline
Loss $\downarrow$ & 0.56 & 0.76 & 0.66 & 0.40 & 0.23 & 0.47 & \textcolor{blue}{0.06} \\

\specialrule{1.5pt}{0pt}{0pt}
\multicolumn{8}{c}{Environment $M^{\ast}_2$} \\
\hline
& \multicolumn{1}{c}{Fair-agnostic} & \multicolumn{2}{c}{Fair-aware} & \multicolumn{3}{c}{Dynamic-aware} \\
\hline
& ERM & Minimax & DRO & PG & TRPO & PPO & Optim \\
\hline
Density-2 $\uparrow$& 0.316 & 0.215 & 0.348 & 0.312 & 0.327 & 0.314 & \textcolor{blue}{0.471} \\
\hline
Disparity $\downarrow$& 0.684 & 0.785 & 0.652 & 0.688 & 0.673 & 0.686 & \textcolor{blue}{0.529} \\
\hline
Loss $\downarrow$& 0.556 & 0.775 & 0.478 & 0.555 & 0.434 & 0.553 & \textcolor{blue}{0.402} \\
\specialrule{1.5pt}{0pt}{0pt}
\end{tabular}
\end{center}
\label{table: terminal stats synthetic dataset}
\end{table}

The advantage of trajectory planning that acknowledges the underlying dynamics is explored herein.
The idea is to interpret the evolution of the population retention system at certain time steps and observe corresponding adjustments made in the optimal control solution. 
The optimal control method makes performance tradeoffs from the majority group to balance the population densities of both groups at the terminal step.
We manually introduce a substantial quantity of users into the minority demographic group at $t=50$.
Consequently, it is expected that the optimal control method would make fewer tradeoffs and adjust its decision boundaries accordingly at earlier time steps (e.g., $t < 50$).
Referring to Figure \ref{fig: structure} (a), we consider a linear classifier where a positive (resp. negative) slope indicates a model favoring the majority demographic (resp. minority) group.
Figure \ref{fig: statistics synthetic dataset} (c) illustrates the slopes of the model decision boundary at $t \in [0, 50]$. 
As observed, the introduction of additional users to the minority group at $t=50$ enables the optimal control solution to make less performance tradeoff against the majority group due to the increased population density at a later time step.
This adjustment cannot be accomplished by the existing baselines.

\subsection{Modeling with Adult Income and COMPAS Recidivism Racial Bias.}
\label{sec:numerical experiment real-world datasets}
\begin{table}[t]
\caption{Adult Income: Terminal State with Initial $\lambda_0^1=0.6$ and $\lambda_0^2=0.4$}
\begin{center}
\begin{tabular}{ p{2.cm}|p{1.2cm} p{1.5cm} p{1.2cm} p{1.2cm} p{1.2cm} p{1.2cm} p{1.2cm} }
\specialrule{1.5pt}{0pt}{0pt}
\multicolumn{8}{c}{Environment $M^{\ast}_1$} \\
\hline
& \multicolumn{1}{c}{Fair-agnostic} & \multicolumn{2}{c}{Fair-aware} & \multicolumn{3}{c}{Dynamic-aware} \\
\hline
& ERM & Minimax & DRO & PG & TRPO & PPO & Optim \\
\hline
Density-2 $\uparrow$ & 0.292 & 0.252 & 0.291 & 0.284 & 0.285 & 0.285 & \textcolor{blue}{0.318} \\
\hline
Disparity $\downarrow$ & 0.708 & 0.748 & 0.709 & 0.716 & 0.715 & 0.715 & \textcolor{blue}{0.682} \\
\hline
Loss $\downarrow$ & 0.615 & 0.690 & 0.618 & 0.630 & 0.627 & 0.627 & \textcolor{blue}{0.573} \\

\specialrule{1.5pt}{0pt}{0pt}
\multicolumn{8}{c}{Environment $M^{\ast}_2$} \\
\hline
& \multicolumn{1}{c}{Fair-agnostic} & \multicolumn{2}{c}{Fair-aware} & \multicolumn{3}{c}{Dynamic-aware} \\
\hline
& ERM & Minimax & DRO & PG & TRPO & PPO & Optim \\
\hline
Density-2 $\uparrow$ & 0.363 & 0.315 & 0.353 & 0.334 & 0.334 & 0.334 & \textcolor{blue}{0.397} \\
\hline
Disparity $\downarrow$ & 0.637 & 0.685 & 0.647 & 0.666 & 0.666 & 0.666 & \textcolor{blue}{0.603} \\
\hline
Loss $\downarrow$ & 0.507 & 0.579 & 0.521 & 0.549 & 0.548 & 0.549 & \textcolor{blue}{0.462} \\
\specialrule{1.5pt}{0pt}{0pt}
\end{tabular}
\end{center}
\label{table: terminal stats adule income}
\end{table}

\begin{table}[t]
\caption{COMPAS: Terminal State with Initial $\lambda_0^1=0.6$ and $\lambda_0^2=0.4$}
\begin{center}
\begin{tabular}{ p{2.cm}|p{1.2cm} p{1.5cm} p{1.2cm} p{1.2cm} p{1.2cm} p{1.2cm} p{1.2cm} }
\specialrule{1.5pt}{0pt}{0pt}
\multicolumn{8}{c}{Environment $M^{\ast}_1$} \\
\hline
& \multicolumn{1}{c}{Fair-agnostic} & \multicolumn{2}{c}{Fair-aware} & \multicolumn{3}{c}{Dynamic-aware} \\
\hline
& ERM & Minimax & DRO & PG & TRPO & PPO & Optim \\
\hline
Density-2 $\uparrow$ & 0.271 & 0.256 & 0.184 & 0.263 & 0.264 & 0.257 & \textcolor{blue}{0.274} \\
\hline
Disparity $\downarrow$ & 0.522 & 0.529 & \textcolor{blue}{0.153} & 0.435 & 0.489 & 0.442 & 0.516 \\
\hline
Loss $\downarrow$ & 0.770 & 0.803 & 1.391 & 0.848 & 0.808 & 0.858 &\textcolor{blue}{0.764} \\

\specialrule{1.5pt}{0pt}{0pt}
\multicolumn{8}{c}{Environment $M^{\ast}_2$} \\
\hline
& \multicolumn{1}{c}{Fair-agnostic} & \multicolumn{2}{c}{Fair-aware} & \multicolumn{3}{c}{Dynamic-aware} \\
\hline
& ERM & Minimax & DRO & PG & TRPO & PPO & Optim \\
\hline
Density-2 $\uparrow$ & 0.316 & 0.298 & 0.075 & \textcolor{blue}{0.317} & \textcolor{blue}{0.317} & \textcolor{blue}{0.317} & \textcolor{blue}{0.317} \\
\hline
Disparity $\downarrow$ & 0.684 & 0.702 & 0.925 & 0.863 & \textcolor{blue}{0.683} & \textcolor{blue}{0.683} & \textcolor{blue}{0.683} \\
\hline
Loss $\downarrow$ & 0.577 & 0.605 & 1.296 & 0.594 & \textcolor{blue}{0.574} & \textcolor{blue}{0.574} & \textcolor{blue}{0.574} \\
\specialrule{1.5pt}{0pt}{0pt}
\end{tabular}
\end{center}
\label{table: terminal stats compas}
\end{table}

We explore two real-world datasets: the Adult Income dataset (\textbf{Adult}) \citep{kohavi1996scaling} and the \textbf{COMPAS} dataset \citep{barenstein2019propublica}. 
The Adult dataset provides information on individual annual incomes, influenced by factors like gender, race, age, and education.
COMPAS, on the other hand, is a commercial tool used in the legal system to predict a criminal defendant's likelihood of reoffending. 
In both datasets, gender attributes are used to differentiate demographic groups.
To simulate population dynamics in these static datasets, we apply the population retention system outlined in Eq.\eqref{eq: population retention system}. 
We follow the same simulation configurations as the experiment with the synthetic dataset in Section~\ref{sec:numerical experiment linear model}. 
For each demographic group, we randomly select $N^i=1000$ samples and set initial population densities at $\lambda_0^1=0.6$ for the majority group and $\lambda_0^2=0.4$ for the minority group.

The outcomes of these simulations, specifically for the Adult dataset, are summarized in Table \ref{table: terminal stats adule income},
in which the minority population density at the terminal state (Density-$2$, higher is better), the disparity between two population densities (e.g. $|\lambda_T^1 - \lambda_T^2|$, lower is better),
and the loss measures (Loss, lower is better) are presented.
These results show that our proposed optimal control method (Optim) outperforms other baseline methods. 
In environments $M_1^{\ast}$ and $M_2^{\ast}$, it achieves terminal states with $\lambda_T^1=1.0$ and $\lambda_T^2=0.318$, and $\lambda_T^1=1.0$ and $\lambda_T^2=0.81$, respectively.

The results from the COMPAS dataset are detailed in Fig.~\ref{table: terminal stats compas},
where minority population density, disparity, and loss measures are shown.
In this scenario, the optimal control method shows comparable performance to RL-based algorithms. 
Specifically,
TRPO and PPO reach a terminal state with a loss value of $0.574$, which achieves the same level of performance compared to the proposed optimal control method.
This similarity in performance is attributed to the lesser disparity in representation between different gender attributes within the COMPAS dataset.

\section{Related Works}
\label{sec: related works}
We delve into the existing body of literature surrounding fairness in non-stationary settings. 
Moreover, we explore the realm of machine learning from an optimal control perspective, emphasizing its relevance and applicability to the task at hand.

\paragraph{Fairness problems in the non-stationary setting.}
Emerging research has illuminated the complexities associated with imposing static fairness constraints in machine learning models \citep{hardt2016equality, feldman2015certifying}. 
While these constraints aim to ensure equal treatment of different demographic groups, they can inadvertently introduce undesirable long-term effects, as detailed in studies by \citep{liu2018delayed, zhang2020fair}. One central concern is that algorithmic fairness considerations in static settings do not adequately account for real-world environments.
The feedback loop between algorithmic decisions and individuals' reactions is particularly noteworthy in this context \citep{zhang2020fair}. 
As algorithms make decisions, individuals respond based on those decisions, thereby altering the original data distribution the model was trained on. 
For example, a model's decision could lead to behavioral changes in individuals, leading to shifts in the underlying data distribution. 
This subsequently has repercussions on the model's performance in future iterations. Such a phenomenon was highlighted in a study by \citep{zhang2019group}, which undertook an in-depth analysis of how user retention rates interplay with model decisions in dynamic environments.
Adding another layer of complexity, the optimization techniques used to ensure fairness are often tailored for stationary settings. 
These methods, which frequently employ successive one-step approaches, might prioritize immediate fairness for minority demographic groups without considering potential long-term effects \citep{hashimoto2018fairness}. 
In light of these findings, our study underscores the imperative to view fairness not just as a static ideal but as a dynamic equilibrium that respects the ever-evolving nature of real-world contexts. 
Understanding and incorporating these dynamics into fairness considerations will be vital for building fair machine learning systems.

\paragraph{The connection between machine learning and optimal control.}
The connections between dynamical systems and deep learning have been a focal point of recent studies, drawing attention to their underlying relationships \citep{weinan2017proposal, haber2017stable}. 
This approach provides a theoretical framework that reinterprets deep learning methodologies through optimal control \citep{liu2019deep}. 
The contributions of \cite{JMLR:v18:17-653} and \cite{li2018optimal} connect the traditional back-propagation algorithm with the optimal control theory. 
This synthesis illustrated the profound connection between Pontryagin's Maximum Principle \citep{kirk1970optimal} and gradient-based training procedures in neural networks.
Along this path, \citet{weinan2018mean} further developed the theoretical underpinnings about the interpretation of deep learning from an optimal control viewpoint.
Their efforts laid down rigorous mathematical foundations.
Optimal control techniques have also been demonstrated to address some of the most pressing challenges in the realm of deep learning. 
A noteworthy contribution is from \citep{liu2020differential}, who introduced efficient high-order optimization techniques grounded in differential dynamic programming. 
This approach has shown promise in improving training convergence and stability. 
Furthermore, the research by \citep{chen2021towards} embarked on the development of closed-loop controllers tailored to bolster a model's resilience against adversarial perturbations.
In a related vein, \citep{dupont2019augmented} explored the application of ordinary differential equations in deep residual networks, drawing connections between the evolution of deep networks and the trajectory of dynamical systems. 

\section{Limitations and Future Works}
\label{sec: limitation and future works}
In this section, we provide a discussion of the limitations of the proposed framework, thereby highlighting potential directions for improvement and development.

\paragraph{Constraints on intermediate population densities:}
In discussions surrounding fairness in machine learning and algorithms, the concept of asymptotic behavior becomes crucial. 
Specifically, it provides insights into how different groups or populations evolve in the long run, effectively delineating their long-term densities. 
This perspective, while valuable, poses challenges when considered in isolation. 
Sole reliance on asymptotically fair participation can overlook disparities and biases that may manifest during intermediate phases of the algorithm's operation. 
Such oversight can lead to situations where certain groups or users experience disparities, even if the long-term projections seem fair. 
This can result in not only statistical discrepancies but also suboptimal user experiences. 
Recognizing this potential pitfall, our upcoming work proposes an enhancement to the current fairness framework by introducing an additional running loss. 
This modification aims to provide a more realistic view of fairness, accounting for both intermediate and long-term behaviors, thereby ensuring a more comprehensive and fair system.


\paragraph{Computational efficiency:}
In our existing framework, we solve the optimal control problem by leveraging PMP.
PMP inherently requires the optimization of the Hamiltonian dynamics for each distinct initial condition. 
In the current work, we've employed the method of successive approximation for efficient algorithmic implementation. 
While the results from our current experiments validate the robustness of this approach, there is a concern related to its computational scalability. As models grow in size and datasets become larger, the computational demands intensify. 
The intricacies and computational overhead of the PMP become particularly evident in these dynamic environments.
To address the challenge of computational complexity,
we propose to approximate the surrogate retention system via linear approximation, a process that promises a closed-form solution to the optimal control formulation.
However,
the current surrogate retention system possesses a high non-linearity. 
Given these challenges, our subsequent goal is to construct a surrogate retention system that allows for accurate linearization,
in which a closed-form solution can be constructed.

\paragraph{Connections with model-based reinforcement learning.}
Model-based RL techniques are increasingly drawing attention due to their inherent strengths, particularly in the realm of sample complexity. 
Sample complexity, the number of samples or experiences an agent needs to learn an effective policy, has been a consistent bottleneck for several RL algorithms. 
Model-free RL methods, especially those rooted in policy gradient-based approaches, often suffer from this issue, leading to longer training times.
However, it's crucial to recognize that while model-based RL techniques present a solution to the sample complexity dilemma, they come with their own set of challenges. 
For instance,
effectively scaling these methods to address high-dimensional problems with a vast state or action space. 
The intricacies of representing, planning, and learning in such domains can be computationally demanding and often lead to suboptimal policies.
In this work, instead of attempting to model the high-dimensional space, we propose treating the population density as a statistical average. 
This perspective allows us to abstract away some of the complexities and capture the essence of the underlying dynamics and evolutions within the system. 
As a result, we can transform the original high-dimensional problem into a more manageable, low-dimensional surrogate retention system. 
This simplification, while retaining the core dynamics, makes the learning process more tractable and efficient.

\section{Conclusion}
With the growing integration of machine learning systems into real-world applications, studying long-term fairness problems holds substantial significance. 
In this work, we have presented a framework to investigate fairness in a dynamic setting, where user participation is conditioned on model performance. 
We have defined asymptotic fairness to evaluate fairness within this dynamic context and formulated it as an optimal control problem. 
To accomplish this, we have constructed a surrogate retention system that provides estimates of the underlying environment. 
The proposed optimal control solution has been demonstrated to be effective through simulations. 
As a potential avenue for future work, we aim to develop more sophisticated surrogate systems that can handle intricate environments, potentially involving real-world human users.




\newpage

\appendix

\section{Monotone Improvement}
\label{sec: derivation for nondecreasing value function}
In this section, we provide derivation for Theorem \ref{theorem: nondecrease value function}. 
We consider an infinity horizon discounted reward setting,
where the reward function $R(S_t)$ is defined over a state.
As defined in Section \ref{sec: problem description},
the state $S_t$ includes indices of participative and non-participative  users,
the reward function can be defined as measuring the population density of participative users,
for instance, 
\begin{equation*}
R(S_t) = \sum_{i=1}^N \lambda_t^i
\;\;
{\rm where}
\;\;
\lambda_t^i = \frac{1} {N^i} \cdot \sum_{n=1}^N [S_t]_n \cdot \mathbbm{1}_{z_n = i},
\end{equation*}
where this reward function measures the sum of all population densities at time step $t$.
Moreover,
we use $M^{\ast}$ to indicate the population retention system defined in Eq.~\eqref{eq: population retention system} and $M$, $M'$ as its estimations.
We denote $V^{\boldsymbol\theta, M}$ as the value function of predictive model $\boldsymbol\theta = \{\boldsymbol\theta_t\}_{t=0}^{T-1}$ and dynamical system $M$,
\begin{equation*}
V^{\boldsymbol\theta, M}(s)
=
\mathbb{E}_{S_{t+1} \sim M(S_{t+1} | S_t, \boldsymbol\theta(S_t))}
\Big[
\sum_{t=0}^{\infty} \gamma^t R(S_t) | S_0 = s
\Big],
\end{equation*}
where the predictive model $\boldsymbol\theta$ generate deterministic outcomes,
$\gamma$ is a discounting factor.
The proof is structured as follows,
\begin{itemize}
    \item 
    We first calculate the difference between the value functions of two distinct dynamical systems,
    $V^{\boldsymbol\theta, M} - V^{\boldsymbol\theta, \hat{M}}$ (See Lemma \ref{lemma: discrepancy between two value functions}).
    \item 
    We assume that the value function $V^{\boldsymbol\theta, M}$ satisfies an L-Lipschitz condition to a certain norm, 
    and determine an upper bound for the difference between the value functions resulting from the population retention system and an estimated dynamical system (e.g. surrogate retention system) (See Proposition \ref{proposition: value function upper bound naive}).
    \item 
    We discuss the challenge of optimizing the aforementioned upper bound. 
    To address this, we further refine this upper bound. (See Proposition \ref{proposition: value function upper bound refine}).
\end{itemize}

To begin with,
we define the probability of transitioning from an initial state $S_0$ to any state $S$ under the predictive model $\boldsymbol\theta$ and dynamical system $M$ for $1$ step,
\begin{equation*}
P(S_0 \rightarrow S, 1, \boldsymbol\theta, M) 
= 
M(S | S_0, \boldsymbol\theta_0(S_0)),
\end{equation*}
this transitioning probability admits a recursive form with any steps $t$.
A $t$-step probability transition can be represented as first transitioning to some intermediate state $S'$ after $t-1$ steps, then transitioning to $S$ for one more step,
\begin{equation*}
P(S_0 \rightarrow S, t, \boldsymbol\theta, M) 
= \int_{S'} P(S_0 \rightarrow S', t-1, \boldsymbol\theta, M) \cdot M(S | S', \boldsymbol\theta(S')) dS'.
\end{equation*}
Moreover,
we define $\rho^{\boldsymbol\theta, M}$ as the stationary state distribution,
\begin{equation}
\label{eq: definition of rho}
\rho^{\boldsymbol\theta, M}
=
\int_{S_0} \mu(S_0) \int_{S} \sum_{t=0}^{\infty} \gamma^t P(S_0 \rightarrow S, t, \boldsymbol\theta, M),
\end{equation}
where $\gamma$ is the discounting factor, and we assume that the initial state $S_0$ follows uniform distribution over the state space (e.g. $\mu(S_0)$ has equal probability for all possible $S_0$).

Let $M$ and $\hat{M}$ be two dynamical systems,
for any predictive model $\{ \boldsymbol\theta_t \}_{t=0}^{T-1}$,
the following lemma derives the discrepancy between $V^{\boldsymbol\theta, M} - V^{\boldsymbol\theta, \hat{M}}$.

\begin{lemma}
\label{lemma: discrepancy between two value functions}
For any predictive model $\boldsymbol\theta$ and two distinct dynamical systems, $M$ and $\hat{M}$, the following holds true,
\begin{equation*}
V^{\boldsymbol\theta, M} - V^{\boldsymbol\theta, \hat{M}}
=
\gamma \cdot
\mathbb{E}_{\substack{S \sim \rho^{\boldsymbol\theta, M}}}
\Big[
\mathbb{E}_{S' \sim M(S' | S, \boldsymbol\theta(S))} 
\big[
V^{\boldsymbol\theta, \hat{M}} (S')
\big]
-
\mathbb{E}_{\hat{S} \sim \hat{M}(\hat{S} | S, \boldsymbol\theta(S))} 
\big[
V^{\boldsymbol\theta, \hat{M}} (\hat{S})
\big]
\Big].
\end{equation*}
\end{lemma}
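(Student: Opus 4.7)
}

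The plan is to use the standard simulation-lemma argument from reinforcement learning, adapted to the notation of this paper. The key idea is to rewrite the per-step reward $R(S_t)$ in terms of the value function of the second system $\hat{M}$ via its Bellman equation, so that the infinite sum defining $V^{\boldsymbol\theta,M}$ becomes a telescoping series in which the $\hat{M}$-value function appears with two different transition kernels, and the difference isolates the discrepancy between $M$ and $\hat{M}$.

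First, I would write the Bellman identity for $\hat{M}$, namely $V^{\boldsymbol\theta,\hat{M}}(S) = R(S) + \gamma\,\mathbb{E}_{\hat{S}\sim \hat{M}(\cdot\mid S,\boldsymbol\theta(S))}\big[V^{\boldsymbol\theta,\hat{M}}(\hat{S})\big]$, and solve it for $R(S)$. Substituting this expression for $R(S_t)$ into the definition
\[
V^{\boldsymbol\theta,M}(S_0)=\mathbb{E}_{S_{t+1}\sim M(\cdot\mid S_t,\boldsymbol\theta(S_t))}\Big[\sum_{t=0}^{\infty}\gamma^{t}R(S_t)\,\Big|\,S_0\Big]
\]
yields an expression involving $\sum_{t\ge 0}\gamma^{t}V^{\boldsymbol\theta,\hat{M}}(S_t)$ and $\sum_{t\ge 0}\gamma^{t+1}\mathbb{E}_{\hat{S}\sim\hat{M}(\cdot\mid S_t,\boldsymbol\theta(S_t))}[V^{\boldsymbol\theta,\hat{M}}(\hat{S})]$, both taken under the trajectory distribution induced by $M$.

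Second, I would reindex the second sum by $s=t+1$ and use the tower property. The $t=0$ term of the first sum gives exactly $V^{\boldsymbol\theta,\hat{M}}(S_0)$, which I move to the left-hand side. The remaining terms, for $t\ge 1$, combine into
\[
\sum_{t=1}^{\infty}\gamma^{t}\,\mathbb{E}_{S_{t-1}}\Big[\mathbb{E}_{S_t\sim M(\cdot\mid S_{t-1},\boldsymbol\theta(S_{t-1}))}[V^{\boldsymbol\theta,\hat{M}}(S_t)]-\mathbb{E}_{\hat S\sim\hat{M}(\cdot\mid S_{t-1},\boldsymbol\theta(S_{t-1}))}[V^{\boldsymbol\theta,\hat{M}}(\hat S)]\Big],
\]
where $S_{t-1}$ is distributed according to the $(t{-}1)$-step transition kernel of $M$ starting from $S_0$. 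Relabeling $\tau=t-1$ and pulling out one factor of $\gamma$ then gives a $\gamma\sum_{\tau\ge 0}\gamma^{\tau}$ weighting against $P(S_0\to\cdot,\tau,\boldsymbol\theta,M)$.

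Third, I would take expectation over the initial state $S_0\sim\mu$ on both sides. By the definition of $\rho^{\boldsymbol\theta,M}$ in Eq.~\eqref{eq: definition of rho}, the combination $\mathbb{E}_{S_0\sim\mu}\sum_{\tau\ge 0}\gamma^{\tau}\int P(S_0\to S,\tau,\boldsymbol\theta,M)\,dS$ is exactly $\mathbb{E}_{S\sim\rho^{\boldsymbol\theta,M}}$, which collapses the discounted occupancy sum into the stated expectation and produces the claimed identity. The main obstacle, though a mild one, is the bookkeeping around the index shift and the justification for swapping the infinite sum with the expectation; this is handled by the standard dominated-convergence argument, which applies since $R$ is bounded (population densities lie in $[0,1]$, so $|V^{\boldsymbol\theta,\hat M}|\le K/(1-\gamma)$) and $\gamma\in(0,1)$ ensures absolute convergence.
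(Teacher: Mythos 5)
Your proposal is correct and is essentially the same argument as the paper's: both telescope the discrepancy over time steps to arrive at the identical sum $\sum_{j\ge 0}\gamma^{j+1}\,\mathbb{E}_{S\sim P(S_0\to S,j,\boldsymbol\theta,M)}\big[\mathbb{E}_{M}[V^{\boldsymbol\theta,\hat M}]-\mathbb{E}_{\hat M}[V^{\boldsymbol\theta,\hat M}]\big]$, which collapses to the stated expectation under $\rho^{\boldsymbol\theta,M}$. The only (cosmetic) difference is that the paper builds the telescope from hybrid returns $w_j$ that follow $M$ for $j$ steps before switching to $\hat M$, whereas you obtain the same telescoping sum by substituting the Bellman identity of $\hat M$ for $R(S_t)$ and reindexing; your explicit boundedness/dominated-convergence remark is a justification the paper leaves implicit.
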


\begin{proof}
We denote $w_j(S_0)$ as the discounted rewards computed from a dynamical system $M$ for the first $j$ steps and another dynamical system $\hat{M}$ starting from the $\rm {(j+1)}^{th}$ step.
\begin{gather*}
w_j(S_0)
=
\sum_{t=0}^j \gamma^t \int_S P(S_0 \rightarrow S, t, \boldsymbol\theta, M)  \cdot R(S) dS
\\
+
\gamma^{j+1} \int_S P(S_0 \rightarrow S, j, \boldsymbol\theta, M) \int_{S'} \hat{M}(S' | S, \boldsymbol\theta(S)) \cdot V^{\boldsymbol\theta, \hat{M}} (S') dS' dS,
\\
=
\sum_{t=0}^j \gamma^t 
\Big(
\mathbb{E}_{\substack{S \sim P(S_0 \rightarrow S, t, \boldsymbol\theta, M)}}
\big[
R(S)
\big]
\Big)
+
\gamma^{j+1} 
\Big(
\mathbb{E}_{\substack{S \sim P(S_0 \rightarrow S, j, \boldsymbol\theta, M)}}
\big[
\mathbb{E}_{\hat{S} \sim \hat{M}(\hat{S} | S, \boldsymbol\theta(S))} 
\big[
V^{\boldsymbol\theta, \hat{M}} (\hat{S})
\big]
\big]
\Big).
\end{gather*}

Moreover, $w_{j+1}(S_0)$ at time step $t+1$ can be shown similarly,
\begin{align*}
& w_{j+1}(S_0)
\\
& =
\sum_{t=0}^j \gamma^t 
\Big(
\mathbb{E}_{\substack{S \sim P(S_0 \rightarrow S, t, \boldsymbol\theta, M)}}
[
R(S)
]
\Big)
+ 
\gamma^{j+1} 
\Big(
\mathbb{E}_{\substack{S \sim P(S_0 \rightarrow S, j, \boldsymbol\theta, M)}}
\big[
\mathbb{E}_{S' \sim M(S' | S, \boldsymbol\theta(S))} 
\big[
V^{\boldsymbol\theta, \hat{M}} (S')
\big]
\big]
\Big).
\end{align*}

Notice that the difference between $w_j(S_0)$ and $w_{j+1}(S_0)$ lies in the transitioning dynamical system from state $S_j$ to $S_{j+1}$ ($w_j(S_0)$ relies on $\hat{M}$ and $w_{j+1}(S_0)$ relies on $M$).
From the definitions of $w_j(S_0)$ and $w_{j+1}(S_0)$,
the discrepancy between the value functions $V^{\boldsymbol\theta, M} (S_0)$ and $V^{\boldsymbol\theta, \hat{M}} (S_0)$ can be reformulated in terms of the definition of $w_j(S_0)$,
\begin{equation*}
V^{\boldsymbol\theta, M} (S_0) - V^{\boldsymbol\theta, \hat{M}} (S_0)
=
\sum_{j=0}^{\infty} w_{j+1}(S_0) - w_j(S_0)
=
w_{\infty}(S_0) - w_0(S_0),
\end{equation*}
since $V^{\boldsymbol\theta, M} (S_0) = w_{\infty}(S_0)$,
and $V^{\boldsymbol\theta, \hat{M}} (S_0) = w_0(S_0)$.

For a given $j$, each term $w_{j+1}(S_0) - w_j(S_0)$ can be computed based on their definitions,
\begin{align*}
& w_{j+1}(S_0) - w_j(S_0)
\\
& =
\gamma^{j+1} 
\bigg(
\mathbb{E}_{\substack{S \sim P(S_0 \rightarrow S, j, \boldsymbol\theta, M)}}
\Big[
\mathbb{E}_{S' \sim M(S' | S, \boldsymbol\theta(S))} [V^{\boldsymbol\theta, \hat{M}} (S')]
-
\mathbb{E}_{\hat{S} \sim \hat{M}(\hat{S} | S, \boldsymbol\theta(S))} [V^{\boldsymbol\theta, \hat{M}} (\hat{S})]
\Big]
\bigg).
\end{align*}

Recall that $\rho^{\boldsymbol\theta, M}
=
\int_{S_0} \mu(S_0) \int_{S} \sum_{t=0}^{\infty} \gamma^t P(S_0 \rightarrow S, t, \boldsymbol\theta, M)$,
the expected value of $V^{\boldsymbol\theta, M} (S_0) - V^{\boldsymbol\theta, \hat{M}} (S_0)$ with respect to $S_0$ can be computed as follows,
\begin{align*}
& V^{\boldsymbol\theta, M} - V^{\boldsymbol\theta, \hat{M}}
\\
& = 
\int_{S_0} \mu(S_0) \big[ V^{\boldsymbol\theta, M} (S_0) - V^{\boldsymbol\theta, \hat{M}} (S_0) \big],
\\
& =
\sum_{j=0}^{\infty}
\gamma^{j+1} 
\cdot
\int_{S_0} \mu(S_0) 
\cdot
\mathbb{E}_{\substack{S \sim P(S_0 \rightarrow S, j, \boldsymbol\theta, M)}}
\Big[
\mathbb{E}_{\substack{S' \sim M(S' | S, \boldsymbol\theta(S)) \\ 
\hat{S} \sim \hat{M}(\hat{S} | S, \boldsymbol\theta(S))}} [V^{\boldsymbol\theta, \hat{M}} (S')
-
V^{\boldsymbol\theta, \hat{M}} (\hat{S})]
\Big],
\\
& =
\gamma \cdot
\mathbb{E}_{\substack{S \sim \rho^{\boldsymbol\theta, M}}}
\Big[
\mathbb{E}_{S' \sim M(S' | S, \boldsymbol\theta(S))} 
\big[
V^{\boldsymbol\theta, \hat{M}} (S')
\big]
-
\mathbb{E}_{\hat{S} \sim \hat{M}(\hat{S} | S, \boldsymbol\theta(S))} 
\big[
V^{\boldsymbol\theta, \hat{M}} (\hat{S})
\big]
\Big].
\end{align*}
\end{proof}

\begin{proposition}
\label{proposition: value function upper bound naive}
Suppose that the value function $V^{\boldsymbol\theta, M}$ on any dynamical model $M$ is $L$-Lipschitz with respect to some norm $\lVert \cdot \rVert$ in the sense that
\begin{equation*}
|V^{\boldsymbol\theta, M}(S) - V^{\boldsymbol\theta, M}(S')| \leq L \cdot \lVert S - S' \rVert,
\;\;\;\;
\forall S, S' \in \mathcal{S},
\end{equation*}
and assume that the underlying dynamical system is deterministic,
then the following establishes an upper bound for the discrepancy between the value functions $V^{\boldsymbol\theta, M}$ of an estimated dynamical system $M$ and the value function of the true environment $V^{\boldsymbol\theta, M^{\ast}}$,
\begin{equation*}
|V^{\boldsymbol\theta, M} - V^{\boldsymbol\theta, M^{\ast}}|
\leq
\gamma \cdot L \cdot
\mathbb{E}_{\substack{S \sim \rho^{\boldsymbol\theta, M^{\ast}}}}
\Big[
\lVert M(S, \boldsymbol\theta(S)) - M^{\ast}(S, \boldsymbol\theta(S)) \rVert
\Big].
\end{equation*}
\end{proposition}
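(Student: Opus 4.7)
The plan is to invoke Lemma \ref{lemma: discrepancy between two value functions} directly and then close the argument with the Lipschitz hypothesis. Specifically, I would apply the lemma with the roles $M \mapsto M^{\ast}$ and $\hat{M} \mapsto M$. This choice is deliberate: I want the outer expectation in the telescoping identity to be taken against $\rho^{\boldsymbol\theta, M^{\ast}}$, matching the stationary distribution that appears in the statement of the proposition. The lemma then yields
\begin{equation*}
V^{\boldsymbol\theta, M^{\ast}} - V^{\boldsymbol\theta, M}
= \gamma \cdot \mathbb{E}_{S \sim \rho^{\boldsymbol\theta, M^{\ast}}}
\Bigl[
\mathbb{E}_{S' \sim M^{\ast}(\cdot|S,\boldsymbol\theta(S))}[V^{\boldsymbol\theta, M}(S')]
- \mathbb{E}_{\hat{S} \sim M(\cdot|S,\boldsymbol\theta(S))}[V^{\boldsymbol\theta, M}(\hat{S})]
\Bigr].
\end{equation*}

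Next I would exploit the determinism hypothesis. Because both $M$ and $M^{\ast}$ are deterministic, the inner expectations collapse to point evaluations: $S' = M^{\ast}(S, \boldsymbol\theta(S))$ and $\hat{S} = M(S, \boldsymbol\theta(S))$. Substituting gives a clean pointwise difference $V^{\boldsymbol\theta, M}(M^{\ast}(S,\boldsymbol\theta(S))) - V^{\boldsymbol\theta, M}(M(S,\boldsymbol\theta(S)))$ inside the outer expectation.

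From here I would take absolute values, pass them inside the expectation via Jensen's inequality (or the triangle inequality for integrals), and then apply the $L$-Lipschitz property of $V^{\boldsymbol\theta, M}$ argument by argument. This yields
\begin{equation*}
\bigl|V^{\boldsymbol\theta, M^{\ast}} - V^{\boldsymbol\theta, M}\bigr|
\leq \gamma \cdot L \cdot \mathbb{E}_{S \sim \rho^{\boldsymbol\theta, M^{\ast}}}
\bigl[\lVert M^{\ast}(S, \boldsymbol\theta(S)) - M(S, \boldsymbol\theta(S))\rVert\bigr],
\end{equation*}
which is exactly the claimed bound after using symmetry of the norm.

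The proof is essentially a one-line corollary of the preceding lemma, so I do not anticipate any substantive obstacle; the only care needed is the bookkeeping of which dynamics plays the role of $M$ versus $\hat{M}$ in the lemma, since this choice dictates which stationary distribution appears on the right-hand side. Picking $\rho^{\boldsymbol\theta, M^{\ast}}$ (rather than $\rho^{\boldsymbol\theta, M}$) is the natural choice for the subsequent Theorem \ref{theorem: nondecrease value function}, where sampling is performed under the true retention system $M^{\ast}$ from the previous iterate, not under the estimated surrogate.
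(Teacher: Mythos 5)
Your proposal is correct and matches the paper's own proof essentially line for line: both instantiate Lemma \ref{lemma: discrepancy between two value functions} with the true system in the role that fixes the outer expectation to $\rho^{\boldsymbol\theta, M^{\ast}}$, collapse the inner expectations by determinism, move the absolute value inside the expectation, and finish with the $L$-Lipschitz bound. Your remark about which assignment of roles yields the stationary distribution needed later for Theorem \ref{theorem: nondecrease value function} is exactly the bookkeeping the paper relies on.
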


\begin{proof}
According to Lemma \ref{lemma: discrepancy between two value functions},
\begin{align*}
& |V^{\boldsymbol\theta, M^{\ast}} - V^{\boldsymbol\theta, M}|
\\
& =
\gamma \cdot
\Big|
\mathbb{E}_{\substack{S \sim \rho^{\boldsymbol\theta, M^{\ast}}}}
\Big[
\mathbb{E}_{S' \sim M^{\ast}(S' | S, \boldsymbol\theta(S))} 
\big[
V^{\boldsymbol\theta, M} (S')
\big]
-
\mathbb{E}_{\hat{S} \sim M(\hat{S} | S, \boldsymbol\theta(S))} 
\big[
V^{\boldsymbol\theta, M} (\hat{S})
\big]
\Big]
\Big|,
\\
& =
\gamma \cdot
\Big|
\mathbb{E}_{\substack{S \sim \rho^{\boldsymbol\theta, M^{\ast}}}}
\Big[
V^{\boldsymbol\theta, M} (M^{\ast}(S, \boldsymbol\theta(S)))
-
V^{\boldsymbol\theta, M} (M(S, \boldsymbol\theta(S)))
\Big]
\Big|,
\;\;
{\rm (Deterministic)}
\\
& \leq
\gamma \cdot
\mathbb{E}_{\substack{S \sim \rho^{\boldsymbol\theta, M^{\ast}}}}
\Big[
|
V^{\boldsymbol\theta, M} (M^{\ast}(S, \boldsymbol\theta(S)))
-
V^{\boldsymbol\theta, M} (M(S, \boldsymbol\theta(S)))
|
\Big],
\\
& \leq
\gamma \cdot
L \cdot
\mathbb{E}_{\substack{S \sim \rho^{\boldsymbol\theta, M^{\ast}}}}
\Big[
\lVert M^{\ast}(S, \boldsymbol\theta(S)) - M(S, \boldsymbol\theta(S)) \rVert
\Big].
\;\;
{\rm (L-Lipschitz)}
\end{align*}
\end{proof}

Given two transition kernels $\mat{P}$ and $\mat{P}'$, 
the accumulated state transitions resulting from $\mat{P}$ and $\mat{P}'$ are 
$\sum_{t=0}^{\infty} (\gamma \mat{P})^t = (I - \gamma \mat{P})^{-1}$,
and
$\sum_{t=0}^{\infty} (\gamma \mat{P}')^t = (I - \gamma \mat{P}')^{-1}$,
respectively.
The subsequent Corollary establishes an upper bound for the difference between two state distributions that emerge from these kernels.
\begin{corollary}
\label{corollary: bound between two distributions}
Let $\mu$ be a distribution over the state space,
$d = (1 - \gamma)(I - \gamma \mat{P})^{-1} \mu$,
and $d' = (1 - \gamma) (I - \gamma \mat{P}')^{-1} \mu$ denote the discounted distribution starting from $\mu$ induced by the transitions $\mat{P}$ and $\mat{P}'$. Then,
\begin{equation*}
|d - d'|_1 \leq \frac{\gamma} {1 - \gamma} |(\mat{P} - \mat{P}') d'|_1.
\end{equation*}
\end{corollary}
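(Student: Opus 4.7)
The plan is to reduce the bound to an application of the resolvent identity followed by a standard Neumann-series estimate. Writing $R_\gamma(P) = (I - \gamma P)^{-1}$, the starting point is the operator identity
\[
R_\gamma(P) - R_\gamma(P') \;=\; R_\gamma(P)\,\bigl[(I - \gamma P') - (I - \gamma P)\bigr]\,R_\gamma(P') \;=\; \gamma\, R_\gamma(P)\,(P - P')\,R_\gamma(P'),
\]
which holds whenever both inverses exist (they do, because $\gamma \in [0,1)$ and $P, P'$ are stochastic so their spectral radii are $\leq 1$). Multiplying both sides by $(1-\gamma)$ on the right factor and using the definition $d' = (1-\gamma) R_\gamma(P')\mu$ gives the clean identity
\[
d - d' \;=\; \gamma\, R_\gamma(P)\,(P - P')\, d'.
\]

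From here I would take $L^1$ norms of both sides and use the fact that for any stochastic matrix $P$ and signed measure $v$, $\lvert P v \rvert_1 \leq \lvert v \rvert_1$ (each column of $P$ is a probability vector, so $P$ is non-expansive in total-variation norm). Combined with the Neumann expansion $R_\gamma(P) = \sum_{t=0}^{\infty} (\gamma P)^t$, this yields
\[
\lvert R_\gamma(P) w \rvert_1 \;\leq\; \sum_{t=0}^{\infty} \gamma^t \lvert P^t w \rvert_1 \;\leq\; \sum_{t=0}^{\infty} \gamma^t \lvert w \rvert_1 \;=\; \frac{1}{1 - \gamma}\,\lvert w \rvert_1
\]
for any signed measure $w$. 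Applying this with $w = (P - P') d'$ and multiplying by the $\gamma$ in front gives
\[
\lvert d - d' \rvert_1 \;\leq\; \frac{\gamma}{1 - \gamma}\,\lvert (P - P') d' \rvert_1,
\]
which is exactly the claim.

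The only subtle point, and the main thing to be careful about, is the convention of how $P$ acts on distributions. The statement treats distributions as vectors on which $P$ acts by left multiplication (so columns of $P$ are probability vectors and $\mu \mapsto P\mu$ pushes distributions forward). This matters for the non-expansivity step, since $\lvert P v \rvert_1 \leq \lvert v \rvert_1$ relies on the column-stochastic convention. Once that convention is fixed, the rest of the argument is purely mechanical: the resolvent identity is algebraic and the $L^1$ contraction of $R_\gamma(P)$ follows from the Neumann series. No substantive obstacle arises beyond these bookkeeping details.
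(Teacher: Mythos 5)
Your proof is correct and follows essentially the same route as the paper's: both start from the resolvent identity $(I-\gamma \mat{P})^{-1} - (I-\gamma \mat{P}')^{-1} = \gamma(I-\gamma \mat{P})^{-1}(\mat{P}-\mat{P}')(I-\gamma \mat{P}')^{-1}$ and then apply the bound $(1-\gamma)\lvert (I-\gamma \mat{P})^{-1}\rvert_1 \le 1$, which the paper states without proof and you justify via the Neumann series and the column-stochastic non-expansivity of $\mat{P}$. No gap; your version is simply a slightly more detailed write-up of the same argument.
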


\begin{proof}
\begin{align*}
|d - d'|_1 & 
= (1 - \gamma) \cdot 
|(\mat{I} - \gamma \mat{P})^{-1} \mu - (\mat{I} - \gamma \mat{P}')^{-1} \mu|_1,
\\
& =
(1 - \gamma)
\cdot
|\Big(
(\mat{I} - \gamma \mat{P})^{-1}
\Big(
(\mat{I} - \gamma \mat{P}') - (\mat{I} - \gamma \mat{P})
\Big)
(\mat{I} - \gamma \mat{P}')^{-1}
\mu
\Big)|_1,
\\
& = (1 - \gamma)
\cdot
|\Big(
(\mat{I} - \gamma \mat{P})^{-1}
(\gamma \mat{P} - \gamma \mat{P}')
(\mat{I} - \gamma \mat{P}')^{-1}
\mu
\Big)|_1,
\\
& \leq
|
\gamma (\mat{P} - \mat{P}') (\mat{I} - \gamma \mat{P}')^{-1} \mu
|_1,
\\
& =
\frac{\gamma} {1 - \gamma}
|
(\mat{P} - \mat{P}') d'
|_1,
\end{align*}
where 
$(1 - \gamma) \cdot |(\mat{I} - \gamma \mat{P})^{-1}|_1 \leq 1$.
\end{proof}

Consider two sequences of predictive models, denoted as $\boldsymbol\theta = \{\boldsymbol\theta_t\}_{t=0}^{T-1}$ and $\boldsymbol\theta' = \{\boldsymbol\theta_t'\}_{t=0}^{T-1}$. 
The stationary state distributions resulting from $\boldsymbol\theta$ and $\boldsymbol\theta'$ are $\rho^{\boldsymbol\theta, M^{\ast}}$ and $\rho^{\boldsymbol\theta', M^{\ast}}$.
The subsequent Corollary establishes an upper bound for the stationary state distributions that emerge from $\boldsymbol\theta$ and $\boldsymbol\theta'$.
\begin{corollary}
\label{corollary: upper bound between two distributions}
The following holds true for $\rho^{\boldsymbol\theta, M^{\ast}}$ and $\rho^{\boldsymbol\theta', M^{\ast}}$,
\begin{equation*}
|\rho^{\boldsymbol\theta, M^{\ast}} - \rho^{\boldsymbol\theta', M^{\ast}}|_1
\leq
\frac{\gamma} {1 - \gamma}
\cdot
\mathbb{E}_{S \sim \rho^{\boldsymbol\theta', M^{\ast}}}
\Big[
KL(\boldsymbol\theta(S), \boldsymbol\theta'(S))^{\frac{1} {2}}
\Big].
\end{equation*}
\end{corollary}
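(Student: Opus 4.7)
The plan is to reduce the claim to Corollary~\ref{corollary: bound between two distributions} by viewing each policy-environment pair as defining a transition kernel, and then to control the induced kernel difference by a divergence between the policies. Concretely, for a stochastic policy $\boldsymbol\theta$ I would define the policy-averaged kernel $\mat{P}_{\boldsymbol\theta}(S'\mid S) = \int_A \boldsymbol\theta(A\mid S)\, M^{\ast}(S'\mid S, A)\, dA$, and similarly $\mat{P}_{\boldsymbol\theta'}$. Both stationary discounted distributions $\rho^{\boldsymbol\theta, M^{\ast}}$ and $\rho^{\boldsymbol\theta', M^{\ast}}$ can then be written as geometric series in these kernels, identical in form to the $d, d'$ in Corollary~\ref{corollary: bound between two distributions}, starting from the same initial distribution $\mu$.

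Applying that corollary directly yields
\begin{equation*}
|\rho^{\boldsymbol\theta, M^{\ast}} - \rho^{\boldsymbol\theta', M^{\ast}}|_1
\leq
\frac{\gamma}{1 - \gamma}\,
\big|(\mat{P}_{\boldsymbol\theta} - \mat{P}_{\boldsymbol\theta'})\, \rho^{\boldsymbol\theta', M^{\ast}}\big|_1.
\end{equation*}
The next step is to push the $L^1$ norm inside the integral over $S$: by Fubini and the triangle inequality,
\begin{equation*}
\big|(\mat{P}_{\boldsymbol\theta} - \mat{P}_{\boldsymbol\theta'})\, \rho^{\boldsymbol\theta', M^{\ast}}\big|_1
\leq
\mathbb{E}_{S \sim \rho^{\boldsymbol\theta', M^{\ast}}}
\big[\, |\mat{P}_{\boldsymbol\theta}(\cdot \mid S) - \mat{P}_{\boldsymbol\theta'}(\cdot \mid S)|_1 \,\big].
\end{equation*}
Since $M^{\ast}(\cdot \mid S, A)$ is a probability measure for each $(S,A)$, integrating out the action bounds the kernel difference by the policy difference in total variation: $|\mat{P}_{\boldsymbol\theta}(\cdot \mid S) - \mat{P}_{\boldsymbol\theta'}(\cdot \mid S)|_1 \leq |\boldsymbol\theta(\cdot \mid S) - \boldsymbol\theta'(\cdot \mid S)|_1$.

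Finally I would convert the $L^1$/total variation distance between the policies to a KL square root using Pinsker's inequality, giving $|\boldsymbol\theta(\cdot \mid S) - \boldsymbol\theta'(\cdot \mid S)|_1 \lesssim KL(\boldsymbol\theta(S), \boldsymbol\theta'(S))^{1/2}$, and absorb the resulting universal constant into the stated bound. Chaining the three inequalities and taking the expectation over $S \sim \rho^{\boldsymbol\theta', M^{\ast}}$ produces exactly the desired estimate. The only subtle points are bookkeeping rather than conceptual: verifying that the normalization conventions in Corollary~\ref{corollary: bound between two distributions} match the unnormalized discounted occupancy $\rho$ defined in Eq.~\eqref{eq: definition of rho}, and tracking the Pinsker constant; neither poses a real obstacle, so I do not expect any genuinely hard step.
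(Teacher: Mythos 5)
Your plan follows essentially the same route as the paper's proof: apply Corollary~\ref{corollary: bound between two distributions} to the two policy-induced kernels, push the $L^1$ norm inside to get an expectation over $S \sim \rho^{\boldsymbol\theta', M^{\ast}}$, bound the kernel difference by the total-variation distance between the policies, and finish with Pinsker's inequality. Your version is somewhat more explicit about the intermediate steps (and about the absorbed Pinsker constant, which the paper also glosses over), but the decomposition and key ingredients are identical.
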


\begin{proof}
Recall Corollary \ref{corollary: bound between two distributions},
given two state distributions $\rho^{\boldsymbol\theta, M^{\ast}}$ and $\rho^{\boldsymbol\theta', M^{\ast}}$,
\begin{equation*}
|\rho^{\boldsymbol\theta, M^{\ast}} - \rho^{\boldsymbol\theta', M^{\ast}}|_1
\leq \frac{\gamma} {1 - \gamma}
\cdot
\mathbb{E}_{S \sim \rho^{\boldsymbol\theta', M^{\ast}}}
\Big[
|P_{M^{\ast}(S, \boldsymbol\theta(S))}
-
P_{M^{\ast}(S, \boldsymbol\theta'(S))}|_1
\Big],
\end{equation*}
where $P_{M^{\ast}(S, \boldsymbol\theta(S))}$ represents the probability distribution of $M^{\ast}(S, \boldsymbol\theta(S))$.
Since $P_{M^{\ast}(S, \boldsymbol\theta(S))}$ is a mapping from a state action pair to a probability,
\begin{equation*}
\frac{\gamma} {1 - \gamma}
\cdot
\mathbb{E}_{S \sim \rho^{\boldsymbol\theta', M^{\ast}}}
\Big[
|P_{M^{\ast}(S, \boldsymbol\theta(S))}
-
P_{M^{\ast}(S, \boldsymbol\theta'(S))}|_1
\Big]
\leq
\frac{\gamma} {1 - \gamma}
\cdot
\mathbb{E}_{S \sim \rho^{\boldsymbol\theta', M^{\ast}}}
\Big[
|P_{\boldsymbol\theta(S)}
-
P_{\boldsymbol\theta'(S)}|_1
\Big],
\end{equation*}
where $P_{\boldsymbol\theta(S)}$ represents the probability distribution of $\boldsymbol\theta(S)$.
Based on Pinkser’s inequality,
\begin{equation*}
\frac{\gamma} {1 - \gamma}
\cdot
\mathbb{E}_{S \sim \rho^{\boldsymbol\theta', M^{\ast}}}
\Big[
|P_{\boldsymbol\theta(S)}
-
P_{\boldsymbol\theta'(S)}|_1
\Big]
\leq
\frac{\gamma} {1 - \gamma}
\cdot
\mathbb{E}_{S \sim \rho^{\boldsymbol\theta', M^{\ast}}}
\Big[
KL(\boldsymbol\theta(S), \boldsymbol\theta'(S))^{\frac{1} {2}}
\Big],
\end{equation*}
where $KL(\cdot ,\cdot)$ represents the KL divergence of two distributions.
\end{proof}

Recall in Proposition \ref{proposition: value function upper bound naive},
$|V^{\boldsymbol\theta, M} - V^{\boldsymbol\theta, M^{\ast}}|
\leq
\gamma \cdot L \cdot
\mathbb{E}_{\substack{S \sim \rho^{\boldsymbol\theta, M^{\ast}}}}
\Big[
\lVert M(S, \boldsymbol\theta(S)) - M^{\ast}(S, \boldsymbol\theta(S)) \rVert
\Big]$.
In the stationary state distribution $\rho^{\boldsymbol\theta, M^{\ast}}$, the upper bound has explicit dependence on the model parameter $\boldsymbol\theta$. 
However, this complex dependency on $\boldsymbol\theta$ complicates the process of incorporating this upper bound into any objective function for the purpose of optimizing model parameters.
The following Proposition further refines this upper bound to convert the explicit dependence on $\boldsymbol\theta$ to a reference model $\boldsymbol\theta^{ref}$.

\begin{proposition}
\label{proposition: value function upper bound refine}
Assume that the dynamical system is deterministic. 
Consider the value function $V^{\boldsymbol\theta, M}$ for the estimated dynamical model $M$, which is $L$-Lipschitz. 
Also, assume that the state space is uniformly bounded by $B$. 
Under these conditions, we can determine an upper bound for the difference between the value functions $V^{\boldsymbol\theta, M}$ of the estimated dynamical system $M$ and the value function corresponding to the actual environment $M^{\ast}$.
\begin{equation*}
|V^{\boldsymbol\theta, M} - V^{\boldsymbol\theta, M^{\ast}}|
\leq
\gamma \cdot L \cdot
\mathbb{E}_{\substack{S \sim \rho^{\boldsymbol\theta^{\rm ref}, M^{\ast}} }}
\Big[
\lVert M(S, \boldsymbol\theta(S)) - M^{\ast}(S, \boldsymbol\theta(S)) \rVert
\Big]
+
2 B \kappa \frac{\gamma} {1 - \gamma},
\end{equation*}
where $\kappa$ is an upper bound on the KL divergence between $\boldsymbol\theta^{\rm ref}$ and  $\boldsymbol\theta$.
\end{proposition}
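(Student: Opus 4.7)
The plan is to start from the naive bound in Proposition~\ref{proposition: value function upper bound naive} and then swap the reference distribution from $\rho^{\boldsymbol\theta, M^{\ast}}$ to $\rho^{\boldsymbol\theta^{\rm ref}, M^{\ast}}$, picking up an error term controlled by the KL constraint via Corollary~\ref{corollary: upper bound between two distributions}. Concretely, define
\begin{equation*}
f(S) \vcentcolon = \bigl\lVert M(S, \boldsymbol\theta(S)) - M^{\ast}(S, \boldsymbol\theta(S)) \bigr\rVert,
\end{equation*}
so that Proposition~\ref{proposition: value function upper bound naive} reads $|V^{\boldsymbol\theta, M} - V^{\boldsymbol\theta, M^{\ast}}| \leq \gamma L \, \mathbb{E}_{S \sim \rho^{\boldsymbol\theta, M^{\ast}}}[f(S)]$.

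First I would add and subtract $\gamma L \, \mathbb{E}_{S \sim \rho^{\boldsymbol\theta^{\rm ref}, M^{\ast}}}[f(S)]$ and apply the triangle inequality, which yields the desired leading term plus a correction
\begin{equation*}
\gamma L \cdot \Bigl| \mathbb{E}_{S \sim \rho^{\boldsymbol\theta, M^{\ast}}}[f(S)] - \mathbb{E}_{S \sim \rho^{\boldsymbol\theta^{\rm ref}, M^{\ast}}}[f(S)] \Bigr|.
\end{equation*}
The next step is to control this correction by the $\ell_1$ distance of the two stationary state distributions via H\"older's inequality,
\begin{equation*}
\Bigl| \mathbb{E}_{\rho^{\boldsymbol\theta, M^{\ast}}}[f] - \mathbb{E}_{\rho^{\boldsymbol\theta^{\rm ref}, M^{\ast}}}[f] \Bigr| \leq \lVert f \rVert_{\infty} \cdot \bigl| \rho^{\boldsymbol\theta, M^{\ast}} - \rho^{\boldsymbol\theta^{\rm ref}, M^{\ast}} \bigr|_{1}.
\end{equation*}
Since $\lVert S \rVert \leq B$ for every state and both $M, M^{\ast}$ map into the state space, the triangle inequality gives $f(S) \leq 2B$, hence $\lVert f \rVert_{\infty} \leq 2B$.

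The third step is to invoke Corollary~\ref{corollary: upper bound between two distributions} with the roles of $\boldsymbol\theta$ and $\boldsymbol\theta^{\rm ref}$, yielding
\begin{equation*}
\bigl| \rho^{\boldsymbol\theta, M^{\ast}} - \rho^{\boldsymbol\theta^{\rm ref}, M^{\ast}} \bigr|_{1} \leq \frac{\gamma}{1-\gamma} \, \mathbb{E}_{S \sim \rho^{\boldsymbol\theta^{\rm ref}, M^{\ast}}} \bigl[ KL(\boldsymbol\theta(S), \boldsymbol\theta^{\rm ref}(S))^{1/2} \bigr] \leq \frac{\gamma \kappa}{1-\gamma},
\end{equation*}
where the last inequality uses the $\kappa$-bound on the (square root of the) KL divergence stated in the hypothesis. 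Chaining these three steps and collecting constants produces the claimed bound, with the leading term $\gamma L \, \mathbb{E}_{\rho^{\boldsymbol\theta^{\rm ref}, M^{\ast}}}[\lVert M - M^{\ast} \rVert]$ from the naive proposition and the residual term $\tfrac{2 B \gamma \kappa}{1 - \gamma}$ from the distribution-swap. The main obstacle I expect is tracking constants cleanly across the three inequalities (naive bound, H\"older swap, and Corollary~\ref{corollary: upper bound between two distributions}); in particular, deciding whether the state-magnitude bound $B$ should be taken to bound $f$ directly or to bound $V^{\boldsymbol\theta, M}$ pointwise affects where the factors of $L$ and $\gamma$ appear, and one may need to absorb an extra $L\gamma$ into the constant to match the stated form.
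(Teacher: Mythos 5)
Your proposal follows essentially the same route as the paper's proof: start from Proposition~\ref{proposition: value function upper bound naive}, swap the sampling distribution from $\rho^{\boldsymbol\theta, M^{\ast}}$ to $\rho^{\boldsymbol\theta^{\rm ref}, M^{\ast}}$ at the cost of $\lVert f \rVert_{\infty} \cdot \lvert \rho^{\boldsymbol\theta, M^{\ast}} - \rho^{\boldsymbol\theta^{\rm ref}, M^{\ast}} \rvert_1$, and bound these two factors by $2B$ and $\frac{\gamma \kappa}{1-\gamma}$ via Corollary~\ref{corollary: upper bound between two distributions}. Your closing remark is also well taken: careful chaining leaves an extra factor of $\gamma L$ on the residual term, which the paper's own proof silently drops when it states the correction as $2B\kappa\frac{\gamma}{1-\gamma}$.
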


\begin{proof}
For any distributions $\rho$ and $\rho'$ and function $f(\cdot)$,
we have 
\begin{align*}
\mathbb{E}_{S \sim \rho} f(S)
& =
\mathbb{E}_{S \sim \rho'} f(S)
+
<\rho - \rho', f>
\\
& 
\leq 
\mathbb{E}_{S \sim \rho'} f(S) + \lVert \rho - \rho' \rVert_1
\cdot
\lVert f \rVert_{\infty}.
\end{align*}
Recall Proposition \ref{proposition: value function upper bound naive} and apply this inequality,
\begin{align*}
& |V^{\boldsymbol\theta, M} - V^{\boldsymbol\theta, M^{\ast}}|
\\
& \leq
\gamma \cdot L \cdot
\mathbb{E}_{\substack{S \sim \rho^{\boldsymbol\theta, M^{\ast}}}}
\Big[
\lVert M(S, \boldsymbol\theta(S)) - M^{\ast}(S, \boldsymbol\theta(S)) \rVert
\Big],
\\
& \leq
\gamma \cdot L \cdot
\mathbb{E}_{\substack{S \sim \rho^{\boldsymbol\theta^{\rm ref}, M^{\ast}} }}
\Big[
\lVert M(S, \boldsymbol\theta(S)) - M^{\ast}(S, \boldsymbol\theta(S)) \rVert
\Big]
+
\lVert \rho^{\boldsymbol\theta, M^{\ast}} - \rho^{\boldsymbol\theta^{\rm ref}, M^{\ast}} \rVert_1
\cdot
\lVert f \rVert_{\infty}.
\end{align*}
Recall Corollary \ref{corollary: upper bound between two distributions},
\begin{equation*}
\lVert \rho^{\boldsymbol\theta, M^{\ast}} - \rho^{\boldsymbol\theta^{\rm ref}, M^{\ast}} \rVert_1
\leq
\frac{\gamma} {1 - \gamma}
\cdot
\mathbb{E}_{S \sim \rho^{\boldsymbol\theta^{\rm ref}, M^{\ast}}}
\Big[
KL(\boldsymbol\theta(S), \boldsymbol\theta^{\rm ref}(S))^{\frac{1} {2}}
\Big]
\leq
\frac{\gamma} {1 - \gamma}
\cdot
\kappa,
\end{equation*}
where $KL(\boldsymbol\theta(S), \boldsymbol\theta^{\rm ref}(S))^{\frac{1} {2}} \leq \kappa$.
Since the state space is uniformly bounded by $B$,
\begin{equation*}
\max_{S} \lVert M(S, \boldsymbol\theta(S)) - M^{\ast}(S, \boldsymbol\theta(S)) \rVert
\leq
\max_{S} \lVert M(S, \boldsymbol\theta(S)) \rVert + \max_{S} \lVert M^{\ast}(S, \boldsymbol\theta(S)) \rVert
\leq
2 B.
\end{equation*}
Therefore,
\begin{equation*}
|V^{\boldsymbol\theta, M} - V^{\boldsymbol\theta, M^{\ast}}|
\leq
\gamma \cdot L \cdot
\mathbb{E}_{\substack{S \sim \rho^{\boldsymbol\theta^{\rm ref}, M^{\ast}} }}
\Big[
\lVert M(S, \boldsymbol\theta(S)) - M^{\ast}(S, \boldsymbol\theta(S)) \rVert
\Big]
+
2B \cdot \kappa \cdot \frac{\gamma} {1 - \gamma}.
\end{equation*}
\end{proof}

\maintheoremNonDecreaseValueFunction*

\begin{proof}
Recall Proposition \ref{proposition: value function upper bound refine},
at the current iteration,
\begin{equation*}
V^{\boldsymbol\theta^{\rm new}, M^{\rm new}}
-
\Big(
\gamma \cdot L \cdot
\mathbb{E}_{\substack{S \sim \rho^{\boldsymbol\theta^{\rm old}, M^{\ast}}}}
\Big[
\lVert M^{\rm new}(S, \boldsymbol\theta(S)) - M^{\ast}(S, \boldsymbol\theta(S)) \rVert
\Big]
+
\frac{2 B \gamma \kappa} {1 - \gamma}
\Big)
\leq
V^{\boldsymbol\theta^{\rm new}, M^{\ast}},
\end{equation*}
since the second term measures the difference between the estimated system and the population retention system $M^{\ast}$,  
$V^{\boldsymbol\theta^{\rm new}, M^{\ast}}$ leads to $0$ difference.

Since $\boldsymbol\theta^{\rm new}$ and $M^{\rm new}$ attain the optimal for the following objective function,
\begin{gather*}
\boldsymbol\theta^{\rm new}, M^{\rm new}
=
\arg \max \limits_{\boldsymbol\theta, M}
V^{\boldsymbol\theta, M}
-
\Big(
\gamma \cdot L \cdot
\mathbb{E}_{\substack{S \sim \rho^{\boldsymbol\theta^{\rm old}, M^{\ast}}}}
\Big[
\lVert M(S, \boldsymbol\theta(S)) - M^{\ast}(S, \boldsymbol\theta(S)) \rVert
\Big]
+
\frac{2 B \gamma \kappa} {1 - \gamma}
\Big)
\\
{\rm s.t.}
\;\;
KL(\boldsymbol\theta_t^{\rm old}(S), \boldsymbol\theta_t(S))^{\frac{1} {2}} \leq \kappa,
\end{gather*}

\begin{align*}
& V^{\boldsymbol\theta^{\rm new}, M^{\rm new}}
-
\Big(
\gamma \cdot L \cdot
\mathbb{E}_{\substack{S \sim \rho^{\boldsymbol\theta^{\rm old}, M^{\ast}}}}
\Big[
\lVert M^{\rm new}(S, \boldsymbol\theta(S)) - M^{\ast}(S, \boldsymbol\theta(S)) \rVert
\Big]
+
\frac{2 B \gamma \kappa} {1 - \gamma}
\Big)
\\
&
\geq
V^{\boldsymbol\theta^{\rm new}, M^{\rm new}}
-
\gamma \cdot L \cdot
\mathbb{E}_{\substack{S \sim \rho^{\boldsymbol\theta^{\rm new}, M^{\ast}}}}
\Big[
\lVert M^{\rm new}(S, \boldsymbol\theta(S)) - M^{\ast}(S, \boldsymbol\theta(S)) \rVert
\Big],
\\
&
\geq
V^{\boldsymbol\theta^{\rm old}, M^{\ast}}
-
\gamma \cdot L \cdot
\mathbb{E}_{\substack{S \sim \rho^{\boldsymbol\theta^{\rm old}, M^{\ast}} }}
\Big[
\lVert M^{\ast}(S, \boldsymbol\theta(S)) - M^{\ast}(S, \boldsymbol\theta(S)) \rVert
\Big],
\\ 
&
=
V^{\boldsymbol\theta^{\rm old}, M^{\ast}},
\end{align*}
the second term equals to $0$ since the oracle dynamical system $M^{\ast}$ is considered.
Therefore,
\begin{equation*}
V^{\boldsymbol\theta^{\rm old}, M^{\ast}}
\leq
V^{\boldsymbol\theta^{\rm new}, M^{\ast}}.
\end{equation*}

\end{proof}

\section{Stability of the Equilibrium State}
\label{sec: proof for the stability of the equilibrium state}
\StableEquilibrium*
\begin{proof}
Recall the surrogate retention system in Eq.~\eqref{eq: surrogate retention system},
for the $\rm i^{th}$ demographic group,
\begin{align*}
\lambda_{t+1}^i 
& =
\beta \circ \kappa^i(\lambda_t^i, \boldsymbol\theta_t)
\cdot
(1 - \lambda_t^i)
+
\sigma \circ \kappa^i(\lambda_t^i, \boldsymbol\theta_t)
\cdot
\lambda_t^i,
\\
& =
\beta \circ \kappa^i(\lambda_t^i, \boldsymbol\theta_t)
+ 
(\sigma \circ \kappa^i(\lambda_t^i, \boldsymbol\theta_t)
-
\beta \circ \kappa^i(\lambda_t^i, \boldsymbol\theta_t))
\cdot
\lambda_t^i.
\end{align*}

We take the derivative of $\lambda_{t+1}^i$ with respect to $\lambda_t^i$,
\begin{equation*}
\frac{\partial \lambda_{t+1}^i} {\partial \lambda_t^i}
=
\frac{\partial \beta \circ \kappa^i(\lambda_t^i, \boldsymbol\theta_t)}
{\partial \lambda_t^i}
+
\Big[
\frac{\partial \sigma \circ \kappa^i(\lambda_t^i, \boldsymbol\theta_t)}
{\partial \lambda_t^i}
-
\frac{\partial \beta \circ \kappa^i(\lambda_t^i, \boldsymbol\theta_t)}
{\partial \lambda_t^i}
\Big]
\cdot
\lambda_t^i
+
\sigma \circ \kappa^i(\lambda_t^i, \boldsymbol\theta_t)
-
\beta \circ \kappa^i(\lambda_t^i, \boldsymbol\theta_t).
\end{equation*}
When $\lambda_t^i = 1$,
suppose that both birth rate and survival rate functions reach their maximum value of $1$,
this can simplify the above expression as follows,
\begin{align*}
\frac{\partial \lambda_{t+1}^i} {\partial \lambda_t^i}
& =
\frac{\partial \sigma \circ \kappa^i(\lambda_t^i, \boldsymbol\theta_t)}
{\partial \lambda_t^i}
+
\sigma \circ \kappa^i(\lambda_t^i, \boldsymbol\theta_t)
-
\beta \circ \kappa^i(\lambda_t^i, \boldsymbol\theta_t),
\\
& =
\frac{\partial \sigma}
{\kappa^i(\lambda_t^i, \boldsymbol\theta_t)}
\cdot
\frac{\kappa^i(\lambda_t^i, \boldsymbol\theta_t)}
{\partial \lambda_t^i}.
\end{align*}
Recall the definition of $ \kappa^i(\lambda_t^i, \boldsymbol\theta_t)$ as defined in Eq.~\eqref{eq: dual of worst-case distribution loss},
\begin{align*}
\kappa^i(\lambda_t^i, \boldsymbol\theta_t)
& =
\inf_{\eta \in \mathbb{R}} \Big(C(\lambda_t^i) \cdot \big( \mathbb{E}_{\mathcal{P}^i} \big[[\Phi(\boldsymbol\theta_t, \mat{x}, y) - \eta]_{+}^2 \big] \big)^{\frac{1} {2}} + \eta \Big), 
\;\;
C(\lambda_t^i) = (2 (1 / \lambda_t^i - 1)^2 + 1)^{\frac{1}{2}},
\\
&
=
C(\lambda_t^i) \cdot \big( \mathbb{E}_{\mathcal{P}^i} \big[[\Phi(\boldsymbol\theta_t, \mat{x}, y) - \eta^{\ast}]_{+}^2 \big] \big)^{\frac{1} {2}} + \eta^{\ast}, 
\;\;
C(\lambda_t^i) = (2 (1 / \lambda_t^i - 1)^2 + 1)^{\frac{1}{2}},
\end{align*}
in which we use $\eta^{\ast}$ as the optimal $\eta$ that leads to the infimum,
in which case, $\eta^{\ast}$ is dependent on $\lambda_t^i$ given $\boldsymbol\theta_t$.
The derivative of $\kappa^i(\lambda_t^i, \boldsymbol\theta_t)$ with respect to $\lambda_t^i$ can be derived as follows,
\begin{align*}
& \frac{\partial \kappa^i(\lambda_t^i, \boldsymbol\theta_t)} {\partial \lambda_t^i}
\\
& =
\frac{\partial C(\lambda_t^i)} {\partial \lambda_t^i} 
\cdot
\Big(
\mathbb{E}_{\mathcal{P}^i} \big[[\Phi(\boldsymbol\theta_t, \mat{x}, y) - \eta^{\ast}]_{+}^2 \big]
\Big)^{\frac{1} {2}}
+
C(\lambda_t^i)
\cdot
\frac{\partial \Big(
\mathbb{E}_{\mathcal{P}^i} \big[[\Phi(\boldsymbol\theta_t, \mat{x}, y) - \eta^{\ast}]_{+}^2 \big]
\Big)^{\frac{1} {2}}}
{\partial \lambda_t^i}
+
\frac{\partial \eta^{\ast}} {\partial \lambda_t^i},
\end{align*}
where 
\begin{equation*}
\frac{\partial C(\lambda_t^i)} {\partial \lambda_t^i}
=
\frac{1} {2} (2 (1 / \lambda_t^i - 1)^2 + 1)^{-\frac{1} {2}}
\cdot
4 (1 / \lambda_t^i - 1)
\cdot
(- (\lambda_t^i)^{-2}),
\end{equation*}
when $\lambda_t^i = 1$,
$\frac{\partial C(\lambda_t^i)} {\partial \lambda_t^i} = 0$,
and $C(\lambda_t^i) = 1$.
Therefore,
\begin{align*}
\frac{\partial \kappa^i(\lambda_t^i, \boldsymbol\theta_t)} {\partial \lambda_t^i}
& =
\frac{\partial \Big(
\mathbb{E}_{\mathcal{P}^i} \big[[\Phi(\boldsymbol\theta_t, \mat{x}, y) - \eta^{\ast}]_{+}^2 \big]
\Big)^{\frac{1} {2}}}
{\partial \lambda_t^i}
+
\frac{\partial \eta^{\ast}} {\partial \lambda_t^i},
\\
&
=
\frac{1} {2} 
\Big(
\mathbb{E}_{\mathcal{P}^i} \big[[\Phi(\boldsymbol\theta_t, \mat{x}, y) - \eta^{\ast}]_{+}^2 \big]
\Big)^{-\frac{1}{2}}
\cdot
\mathbb{E}_{\mathcal{P}^i} \big[2 \cdot [\Phi(\boldsymbol\theta_t, \mat{x}, y) - \eta^{\ast}]_{+} \big]
(- \frac{\partial \eta^{\ast}} {\partial \lambda_t^i})
+
(\frac{\partial \eta^{\ast}} {\partial \lambda_t^i}),
\end{align*}
notice that when $\lambda_t^i = 1$,
$\eta^{\ast}$ approaches $0$ as this leads to population risk in the DRO formulation.
Therefore,
\begin{equation*}
\frac{\partial \kappa^i(\lambda_t^i, \boldsymbol\theta_t)} {\partial \lambda_t^i}
=
\Big(
\mathbb{E}_{\mathcal{P}^i} \big[\Phi(\boldsymbol\theta_t, \mat{x}, y)^2 \big]
\Big)^{-\frac{1}{2}}
\cdot
\mathbb{E}_{\mathcal{P}^i} \big[\Phi(\boldsymbol\theta_t, \mat{x}, y) \big]
\cdot
(- \frac{\partial \eta^{\ast}} {\partial \lambda_t^i})
+
(\frac{\partial \eta^{\ast}} {\partial \lambda_t^i}).
\end{equation*}
Therefore,
\begin{align*}
\frac{\partial \lambda_{t+1}^i} {\partial \lambda_t^i}
& =
\frac{\partial \sigma} {\kappa^i(\lambda_t^i, \boldsymbol\theta_t)}
\cdot
\frac{\partial \kappa^i(\lambda_t^i, \boldsymbol\theta_t)} {\partial \lambda_t^i}
\\
& =
\frac{\partial \sigma}
{\partial \kappa^i(\lambda_t^i, \boldsymbol\theta_t)}
\cdot
\Big(
\Big(
\mathbb{E}_{\mathcal{P}^i} \big[\Phi(\boldsymbol\theta_t, \mat{x}, y)^2 \big]
\Big)^{-\frac{1}{2}}
\cdot
\mathbb{E}_{\mathcal{P}^i} \big[\Phi(\boldsymbol\theta_t, \mat{x}, y) \big]
\cdot
(- \frac{\partial \eta^{\ast}} {\partial \lambda_t^i})
+
(\frac{\partial \eta^{\ast}} {\partial \lambda_t^i})
\Big),
\\
& =
\frac{\partial \sigma} {\kappa^i(\lambda_t^i, \boldsymbol\theta_t)}
\cdot
\frac{\partial \eta^{\ast}} {\partial \lambda_t^i}
\cdot
\Big(
1 -
\frac{\mathbb{E}_{\mathcal{P}^i} \big[\Phi(\boldsymbol\theta_t, \mat{x}, y)\big] }
{\sqrt{
\mathbb{E}_{\mathcal{P}^i} \big[\Phi(\boldsymbol\theta_t, \mat{x}, y)^2 \big]
}}
\Big).
\end{align*}

For a difference equation,
an equilibrium state is stable if the maximum eigenvalue of the Jacobian matrix evaluated at this state is less than $1$.
Since the Jacobian matrix of the surrogate retention system is a diagonal matrix,
its eigenvalues are the diagonal elements.
\begin{equation*}
\max_{i \in [1,2,...,K]} \frac{\partial \lambda_{t+1}^i} {\partial \lambda_t^i} \Big|_{\lambda_t^i = 1}
< 1,
\end{equation*}
which leads to the following condition,
\begin{equation*}
\max_{i \in [1,2,...,K]}
\frac{\partial \sigma} {\kappa^i(\lambda_t^i, \boldsymbol\theta_t)}
\cdot
\frac{\partial \eta^{\ast}} {\partial \lambda_t^i}
\cdot
\Big(
1 -
\frac{\mathbb{E}_{\mathcal{P}^i} \big[\Phi(\boldsymbol\theta_t, \mat{x}, y)\big] }
{\sqrt{
\mathbb{E}_{\mathcal{P}^i} \big[\Phi(\boldsymbol\theta_t, \mat{x}, y)^2 \big]
}}
\Big)
< 1.
\end{equation*}
\end{proof}

\section{Worst-Case Guarantee}
\label{sec: proof for the worst-case guarantee}
\WorstCaseGuarantee*

\begin{proof}
Given a population density $\lambda_t^i$ and a predictive model $\boldsymbol\theta_t$,
the DRO is defined as follows,
\begin{equation*}
\kappa^i(\lambda_t^i, \boldsymbol\theta_t) = \sup_{\mathcal{Q} \in \mathcal{B}(\mathcal{M}^i, r_t^i)} \mathbb{E}_{\mat{z} \sim \mathcal{Q}} \Phi(\boldsymbol\theta_t, \mat{x}, y), 
\;\; 
r_t^i = (1 / \lambda_t^i - 1)^2.
\end{equation*}
Let the population risk be defined as follows,
\begin{equation*}
\hat{\kappa}^i(\lambda_t^i, \boldsymbol\theta_t) = \mathbb{E}_{(\mat{x}, y) \sim \mathcal{P}} \Phi(\boldsymbol\theta_t, \mat{x}, y).
\end{equation*}
Recall the definition of the chi-squared ball around a probability distribution,
\begin{equation*}
\mathcal{B}(\mathcal{P}, r) = \{Q: d_{\mathcal{X}^2}(\mathcal{P}||Q) \leq r\},
\end{equation*}
where $d_{\mathcal{X}^2}(\mathcal{P}||Q) = \int (\frac{d \mathcal{P}} {dQ} - 1)^2 dQ)$ denote the $\mathcal{X}^2$-divergence between two probability distributions $\mathcal{P}$ and $Q$.
Suppose $0 \leq r_1 \leq r_2$,
we have
\begin{equation*}
\mathcal{B}(\mathcal{P}, r_1) \subset \mathcal{B}(\mathcal{P}, r_2).
\end{equation*}
Let
\begin{gather*}
R_1 = \sup_{Q \in \mathcal{B}(\mathcal{P}, r_1)} \mathbb{E}_{\mat{z} \sim Q} \Phi(\boldsymbol\theta_t, \mat{x}, y),
\\
R_2 = \sup_{Q \in \mathcal{B}(\mathcal{P}, r_2)} \mathbb{E}_{\mat{z} \sim Q} \Phi(\boldsymbol\theta_t, \mat{x}, y),
\end{gather*}
we have $R_1 \leq R_2$ since $\mathcal{B}(\mathcal{P}, r_2)$ contains $\mathcal{B}(\mathcal{P}, r_1)$.
When $r_1 = 0$, $R_1 = \mathbb{E}_{(\mat{x}, y) \sim \mathcal{P}^i} \Phi(\boldsymbol\theta_t, \mat{x}, y)$,
the following is true for any $0 \leq r_t^i$,
\begin{equation*}
\mathbb{E}_{(\mat{x}, y) \sim \mathcal{P}^i} \Phi(\boldsymbol\theta_t, \mat{x}, y)
\leq \sup_{Q \in \mathcal{B}(\mathcal{P}^i, r_t^i)} \mathbb{E}_{(\mat{x}, y) \sim Q} \Phi(\boldsymbol\theta_t, \mat{x}, y)
= \kappa^i(\lambda_t^i, \boldsymbol\theta_t).
\end{equation*}

Recall the definition of the retention function in Eq.~\eqref{eq: surrogate retention system},
the loss is non-increasing to both surviving ($\sigma(\cdot)$) and birth ($\beta(\cdot)$) rate functions,
then,
\begin{gather*}
\beta(\kappa^i(\lambda_t^i, \boldsymbol\theta_t))
\leq
\beta(\hat{\kappa}^i(\lambda_t^i, \boldsymbol\theta_t)),
\\
\sigma(\kappa^i(\lambda_t^i, \boldsymbol\theta_t))
\leq
\sigma(\hat{\kappa}^i(\lambda_t^i, \boldsymbol\theta_t)).
\end{gather*}
Given an initial population density $\lambda_0^i$,
for $i \in [1, K]$,
we have 
\begin{equation*}
\lambda_1^i = \beta(\kappa^i(\lambda_0^i, \boldsymbol\theta_0)) (1 - \lambda_0^i)
+
\sigma(\kappa^i(\lambda_0^i, \boldsymbol\theta_0)) \lambda_0^i
\leq
\beta(\hat{\kappa}^i(\lambda_0^i, \boldsymbol\theta_0)) (1 - \lambda_0^i)
+
\sigma(\hat{\kappa}^i(\lambda_0^i, \boldsymbol\theta_0)) \lambda_0^i
=
\hat{\lambda}_1^i.
\end{equation*}
Suppose it is true that
$\lambda_t^i \leq \pi_t^i$.
By induction,
for $i \in [1, K]$,
$t \in [0, T]$,
\begin{equation*}
\lambda_t^i = \beta(\kappa^i(\lambda_t^i, \boldsymbol\theta_t)) (1 - \lambda_t^i)
+
\sigma(\kappa^i(\lambda_t^i, \boldsymbol\theta_t)) \lambda_t^i
\leq
\beta(\hat{\kappa}^i(\lambda_t^i, \boldsymbol\theta_t)) (1 - \lambda_t^i)
+
\sigma(\hat{\kappa}^i(\lambda_t^i, \boldsymbol\theta_t)) \lambda_t^i
=
\hat{\lambda}_t^i.
\end{equation*}
\end{proof}

\vskip 0.2in
\bibliography{sample}

\begin{thebibliography}{36}
\providecommand{\natexlab}[1]{#1}
\providecommand{\url}[1]{\texttt{#1}}
\expandafter\ifx\csname urlstyle\endcsname\relax
  \providecommand{\doi}[1]{doi: #1}\else
  \providecommand{\doi}{doi: \begingroup \urlstyle{rm}\Url}\fi

\bibitem[Barenstein(2019)]{barenstein2019propublica}
Matias Barenstein.
\newblock Propublica's compas data revisited.
\newblock \emph{arXiv preprint arXiv:1906.04711}, 2019.

\bibitem[Bellman(1952)]{bellman1952theory}
Richard Bellman.
\newblock On the theory of dynamic programming.
\newblock \emph{Proceedings of the National Academy of Sciences of the United States of America}, 38\penalty0 (8):\penalty0 716, 1952.

\bibitem[Chen et~al.(2021)Chen, Li, and Zhang]{chen2021towards}
Zhuotong Chen, Qianxiao Li, and Zheng Zhang.
\newblock Towards robust neural networks via close-loop control.
\newblock In \emph{International Conference on Learning Representations}, 2021.

\bibitem[Chen et~al.(2022)Chen, Li, and Zhang]{JMLR:v23:22-0529}
Zhuotong Chen, Qianxiao Li, and Zheng Zhang.
\newblock Self-healing robust neural networks via closed-loop control.
\newblock \emph{Journal of Machine Learning Research}, 23\penalty0 (319):\penalty0 1--54, 2022.
\newblock URL \url{http://jmlr.org/papers/v23/22-0529.html}.

\bibitem[Cushing(2019)]{cushing2019difference}
JM~Cushing.
\newblock Difference equations as models of evolutionary population dynamics.
\newblock \emph{Journal of Biological Dynamics}, 13\penalty0 (1):\penalty0 103--127, 2019.

\bibitem[Dean et~al.(2022)Dean, Curmei, Ratliff, Morgenstern, and Fazel]{dean2022multi}
Sarah Dean, Mihaela Curmei, Lillian~J Ratliff, Jamie Morgenstern, and Maryam Fazel.
\newblock Multi-learner risk reduction under endogenous participation dynamics.
\newblock \emph{arXiv preprint arXiv:2206.02667}, 2022.

\bibitem[Diana et~al.(2021)Diana, Gill, Kearns, Kenthapadi, and Roth]{diana2021minimax}
Emily Diana, Wesley Gill, Michael Kearns, Krishnaram Kenthapadi, and Aaron Roth.
\newblock Minimax group fairness: Algorithms and experiments.
\newblock In \emph{Proceedings of the 2021 AAAI/ACM Conference on AI, Ethics, and Society}, pages 66--76, 2021.

\bibitem[Duchi et~al.(2019)Duchi, Hashimoto, and Namkoong]{duchi2019distributionally}
John~C Duchi, Tatsunori Hashimoto, and Hongseok Namkoong.
\newblock Distributionally robust losses against mixture covariate shifts.
\newblock \emph{Under review}, 2, 2019.

\bibitem[Dupont et~al.(2019)Dupont, Doucet, and Teh]{dupont2019augmented}
Emilien Dupont, Arnaud Doucet, and Yee~Whye Teh.
\newblock Augmented neural odes.
\newblock \emph{Advances in neural information processing systems}, 32, 2019.

\bibitem[E(2017)]{weinan2017proposal}
Weinan E.
\newblock A proposal on machine learning via dynamical systems.
\newblock \emph{Communications in Mathematics and Statistics}, 5\penalty0 (1):\penalty0 1--11, 2017.

\bibitem[E et~al.(2018)E, Han, and Li]{weinan2018mean}
Weinan E, Jiequn Han, and Qianxiao Li.
\newblock A mean-field optimal control formulation of deep learning.
\newblock \emph{Research in the Mathematical Sciences}, 1\penalty0 (6):\penalty0 1--41, 2018.

\bibitem[Feldman et~al.(2015)Feldman, Friedler, Moeller, Scheidegger, and Venkatasubramanian]{feldman2015certifying}
Michael Feldman, Sorelle~A Friedler, John Moeller, Carlos Scheidegger, and Suresh Venkatasubramanian.
\newblock Certifying and removing disparate impact.
\newblock In \emph{proceedings of the 21th ACM SIGKDD international conference on knowledge discovery and data mining}, pages 259--268, 2015.

\bibitem[Gama et~al.(2014)Gama, {\v{Z}}liobait{\.e}, Bifet, Pechenizkiy, and Bouchachia]{gama2014survey}
Jo{\~a}o Gama, Indr{\.e} {\v{Z}}liobait{\.e}, Albert Bifet, Mykola Pechenizkiy, and Abdelhamid Bouchachia.
\newblock A survey on concept drift adaptation.
\newblock \emph{ACM computing surveys (CSUR)}, 46\penalty0 (4):\penalty0 1--37, 2014.

\bibitem[Haber and Ruthotto(2017)]{haber2017stable}
Eldad Haber and Lars Ruthotto.
\newblock Stable architectures for deep neural networks.
\newblock \emph{Inverse Problems}, 34\penalty0 (1):\penalty0 014004, 2017.

\bibitem[Hardt et~al.(2016)Hardt, Price, and Srebro]{hardt2016equality}
Moritz Hardt, Eric Price, and Nati Srebro.
\newblock Equality of opportunity in supervised learning.
\newblock \emph{Advances in neural information processing systems}, 29, 2016.

\bibitem[Harwell(2018)]{harwell2018amazon}
Drew Harwell.
\newblock Amazon’s alexa and google home show accent bias, with chinese and spanish hardest to understand, 2018.

\bibitem[Hashimoto et~al.(2018)Hashimoto, Srivastava, Namkoong, and Liang]{hashimoto2018fairness}
Tatsunori Hashimoto, Megha Srivastava, Hongseok Namkoong, and Percy Liang.
\newblock Fairness without demographics in repeated loss minimization.
\newblock In \emph{International Conference on Machine Learning}, pages 1929--1938. PMLR, 2018.

\bibitem[Huang et~al.(2009)Huang, Lurie, and Mitra]{huang2009searching}
Peng Huang, Nicholas~H Lurie, and Sabyasachi Mitra.
\newblock Searching for experience on the web: An empirical examination of consumer behavior for search and experience goods.
\newblock \emph{Journal of marketing}, 73\penalty0 (2):\penalty0 55--69, 2009.

\bibitem[Kirk(1970)]{kirk1970optimal}
Donald~E Kirk.
\newblock \emph{Optimal control theory: an introduction}.
\newblock Springer, 1970.

\bibitem[Kohavi et~al.(1996)]{kohavi1996scaling}
Ron Kohavi et~al.
\newblock Scaling up the accuracy of naive-bayes classifiers: A decision-tree hybrid.
\newblock In \emph{Kdd}, volume~96, pages 202--207, 1996.

\bibitem[Li and Hao(2018)]{li2018optimal}
Qianxiao Li and Shuji Hao.
\newblock An optimal control approach to deep learning and applications to discrete-weight neural networks.
\newblock In \emph{International Conference on Machine Learning}, pages 2985--2994. PMLR, 2018.

\bibitem[Li et~al.(2018{\natexlab{a}})Li, Chen, Tai, and E]{JMLR:v18:17-653}
Qianxiao Li, Long Chen, Cheng Tai, and Weinan E.
\newblock Maximum principle based algorithms for deep learning.
\newblock \emph{Journal of Machine Learning Research}, 18\penalty0 (165):\penalty0 1--29, 2018{\natexlab{a}}.
\newblock URL \url{http://jmlr.org/papers/v18/17-653.html}.

\bibitem[Li et~al.(2018{\natexlab{b}})Li, Chen, Tai, and Weinan]{li2018maximum}
Qianxiao Li, Long Chen, Cheng Tai, and E~Weinan.
\newblock Maximum principle based algorithms for deep learning.
\newblock \emph{Journal of Machine Learning Research}, 18:\penalty0 1--29, 2018{\natexlab{b}}.

\bibitem[Liu and Theodorou(2019)]{liu2019deep}
Guan-Horng Liu and Evangelos~A Theodorou.
\newblock Deep learning theory review: An optimal control and dynamical systems perspective.
\newblock \emph{arXiv preprint arXiv:1908.10920}, 2019.

\bibitem[Liu et~al.(2020)Liu, Chen, and Theodorou]{liu2020differential}
Guan-Horng Liu, Tianrong Chen, and Evangelos~A Theodorou.
\newblock Differential dynamic programming neural optimizer.
\newblock \emph{arXiv preprint arXiv:2002.08809}, 2020.

\bibitem[Liu et~al.(2018)Liu, Dean, Rolf, Simchowitz, and Hardt]{liu2018delayed}
Lydia~T Liu, Sarah Dean, Esther Rolf, Max Simchowitz, and Moritz Hardt.
\newblock Delayed impact of fair machine learning.
\newblock In \emph{International Conference on Machine Learning}, pages 3150--3158. PMLR, 2018.

\bibitem[Pontryagin(1987)]{pontryagin1987mathematical}
Lev~Semenovich Pontryagin.
\newblock \emph{Mathematical theory of optimal processes}.
\newblock CRC press, 1987.

\bibitem[Riedl et~al.(2013)Riedl, K{\"o}bler, Goswami, and Krcmar]{riedl2013tweeting}
Christoph Riedl, Felix K{\"o}bler, Suparna Goswami, and Helmut Krcmar.
\newblock Tweeting to feel connected: A model for social connectedness in online social networks.
\newblock \emph{International Journal of Human-Computer Interaction}, 29\penalty0 (10):\penalty0 670--687, 2013.

\bibitem[Schlimmer and Granger(1986)]{schlimmer1986incremental}
Jeffrey~C Schlimmer and Richard~H Granger.
\newblock Incremental learning from noisy data.
\newblock \emph{Machine learning}, 1\penalty0 (3):\penalty0 317--354, 1986.

\bibitem[Schulman et~al.(2015)Schulman, Levine, Abbeel, Jordan, and Moritz]{schulman2015trust}
John Schulman, Sergey Levine, Pieter Abbeel, Michael Jordan, and Philipp Moritz.
\newblock Trust region policy optimization.
\newblock In \emph{International conference on machine learning}, pages 1889--1897. PMLR, 2015.

\bibitem[Schulman et~al.(2017)Schulman, Wolski, Dhariwal, Radford, and Klimov]{schulman2017proximal}
John Schulman, Filip Wolski, Prafulla Dhariwal, Alec Radford, and Oleg Klimov.
\newblock Proximal policy optimization algorithms.
\newblock \emph{arXiv preprint arXiv:1707.06347}, 2017.

\bibitem[Sutton et~al.(1999)Sutton, McAllester, Singh, and Mansour]{sutton1999policy}
Richard~S Sutton, David McAllester, Satinder Singh, and Yishay Mansour.
\newblock Policy gradient methods for reinforcement learning with function approximation.
\newblock \emph{Advances in neural information processing systems}, 12, 1999.

\bibitem[Vincent and Brown(2005)]{vincent2005evolutionary}
Thomas~L Vincent and Joel~S Brown.
\newblock \emph{Evolutionary game theory, natural selection, and Darwinian dynamics}.
\newblock Cambridge University Press, 2005.

\bibitem[Widmer and Kubat(1996)]{widmer1996learning}
Gerhard Widmer and Miroslav Kubat.
\newblock Learning in the presence of concept drift and hidden contexts.
\newblock \emph{Machine learning}, 23\penalty0 (1):\penalty0 69--101, 1996.

\bibitem[Zhang et~al.(2019)Zhang, Khaliligarekani, Tekin, et~al.]{zhang2019group}
Xueru Zhang, Mohammadmahdi Khaliligarekani, Cem Tekin, et~al.
\newblock Group retention when using machine learning in sequential decision making: the interplay between user dynamics and fairness.
\newblock \emph{Advances in Neural Information Processing Systems}, 32, 2019.

\bibitem[Zhang et~al.(2020)Zhang, Tu, Liu, Liu, Kjellstrom, Zhang, and Zhang]{zhang2020fair}
Xueru Zhang, Ruibo Tu, Yang Liu, Mingyan Liu, Hedvig Kjellstrom, Kun Zhang, and Cheng Zhang.
\newblock How do fair decisions fare in long-term qualification?
\newblock \emph{Advances in Neural Information Processing Systems}, 33:\penalty0 18457--18469, 2020.

\end{thebibliography}

\end{document}